\documentclass{article}
\usepackage{Style/arxiv}
\usepackage{natbib}
\usepackage[utf8]{inputenc}
\usepackage[T1]{fontenc}
\usepackage{hyperref}
\usepackage{url}
\usepackage{booktabs}
\usepackage{microtype}
\usepackage{graphicx}
\usepackage{doi}
\usepackage{amsmath}
\usepackage{amsthm}
\usepackage{amssymb}
\newtheorem{theorem}{Theorem}
\newtheorem{lemma}[theorem]{Lemma}
\usepackage{float}
\usepackage{caption}
\newcommand{\diff}{\mathop{}\!\text{d}}
\usepackage{placeins}
\usepackage{tikz}
\usetikzlibrary{arrows.meta}
\definecolor{ikfadred}{HTML}{8C2F1E}

\title{Low-Rank Friction for Memory-Efficient Transformer Pretraining}

\author{
Rajit Rajpal\\
	School of Mathematics\\
	University of Edinburgh\\
	Edinburgh, UK EH9 3FD \\
    \texttt{rajpal106@berkeley.edu}
	\And
Benedict Leimkuhler        \\
	School of Mathematics\\
	University of Edinburgh\\
	Edinburgh, UK EH9 3FD
}

\renewcommand{\shorttitle}{Rank-1 iKFAD}

\hypersetup{
pdftitle={Rank-1 iKFAD},
pdfauthor={Rajit Rajpal, Benedict Leimkuhler},
}

\begin{document}
\maketitle
\begin{abstract}
iKFAD is a recently proposed optimiser that replaces adaptive learning rates with adaptive friction in the momentum dynamics, yet performs as well as Adam. Its limitation is that the full friction tensor $\xi\in\mathbb{R}^{m\times n}$ carries the same $\mathcal{O}(mn)$ memory overhead per layer as Adam's second-moment buffer. Here we replace iKFAD's friction tensor $\xi$ with a rank-1 outer-product factorisation built from row and column momentum statistics, resulting in Rank-1 iKFAD (R-iKFAD). This reduces the friction memory footprint from $\mathcal{O}(mn)$ to $\mathcal{O}(m+n)$ per layer, which approximately halves iKFAD's total optimiser state. Despite this reduction, R-iKFAD maintains parity in performance with iKFAD: experiments on GPT2-Nano, TinyViT, DistilBERT and GPT2-S confirm that it matches or exceeds iKFAD while nearly halving the memory footprint and remaining comparably robust to hyperparameters. We analyse the continuous-time dynamics in two damping regimes. For linear damping ($\gamma>0$) we prove exponential convergence under strong convexity. For $\gamma=0$, the preferred option in our experiments, the friction is generated entirely from past momentum and switches off as the momentum vanishes, so geometric convergence cannot be shown. We nonetheless prove convergence to the minimiser, together with matching upper and lower bounds on the energy: of order $t^{-1}$ when the regularisation scale $\epsilon_{\mathrm{stab}}$ is zero, and of order $t^{-1/2}$ when it is positive. To our knowledge this is the first convergence rate for a rank-1 factored optimiser in continuous time, and the first such result that does not require positive damping.
\end{abstract}

\section{Introduction}

Adam~\citep{KiBa2015} remains the workhorse optimizer of modern deep learning, combining momentum with coordinate-wise second-moment scaling to accelerate convergence on complex loss landscapes. Adam evolves two auxilliary parameters: the first and second moment estimates. Assuming the model has $N$ parameters, the auxilliary states occupy $2N$ total memory cost beyond the $N$ parameters themselves. At large scale, this $2N$ overhead presents a significant hardware bottleneck for large-model training. To mitigate this, Adafactor~\citep{ShSt2018} introduces a rank-1 factorisation of the second-moment matrix. For a matrix weight $W \in \mathbb{R}^{m \times n}$, Adafactor replaces Adam's second-moment matrix $V \in \mathbb{R}^{m \times n}$ with a factored approximation $\hat{V} = R C^T/(\mathbf{1}_m^T R)$,  where $R \in \mathbb{R}^m$ and $C \in \mathbb{R}^n$ track the row-wise and column-wise squared gradients, respectively. By eliminating the first-moment buffer, Adafactor reduces the per-layer auxiliary state complexity from $\mathcal{O}(mn)$ to $\mathcal{O}(m + n)$ (achieving memory savings of over 99\% when large weight matrices are involved) while maintaining competitive transformer pre-training performance.

\noindent A useful approach to analyzing optimizer dynamics is based on continuous-time ordinary differential equations (ODEs)~\citep{DaGa2020, Ka2024}, where updates are defined as discrete time-stepping approximations. For parameter state vector $x$, momentum vector $p$, and second-moment vector $\zeta$, continuous-time Adam is given in Eqs. (\ref{eq:ct_ad_1})-(\ref{eq:ct_ad_3}). While Adam incorporates parameter-wise adaptation into the \emph{position} equation ($\dot{x}$) via $1/\sqrt{\zeta}$, \citet{KaRaLe+2026} demonstrated that coordinate-wise adaptation can alternatively be integrated into the \emph{momentum} equation ($\dot{p}$). This is achieved by replacing the constant damping scalar factor $\gamma$ with a dynamic, coordinate-wise friction state $\xi$, resulting in the Individual Kinetic Friction-Adaptive Descent (iKFAD) dynamics (\ref{eq:ikfad_1})-(\ref{eq:ikfad_3})

\begin{minipage}[t]{0.48\textwidth}
\begin{align}
    \dot{x} &= \frac{p}{\sqrt{\zeta} + \epsilon}, \label{eq:ct_ad_1}\\
    \dot{p} &= -\nabla f(x) - \gamma p,\\
    \dot{\zeta} &= [\nabla f(x)]^2 - \alpha \zeta. \label{eq:ct_ad_3}
\end{align}
\end{minipage}\hfill
\begin{minipage}[t]{0.48\textwidth}
\begin{align}
    \dot{x} &= p, \label{eq:ikfad_1}\\
    \dot{p} &= -\nabla f(x) - (\gamma + \xi) \odot p,\\
    \dot{\xi} &= \frac{[p]^2}{\mu} - \alpha\, \xi. \label{eq:ikfad_3}
\end{align}
\end{minipage}

\vspace{0.5em}
\noindent These dynamics require storage of a full adaptive friction tensor $\xi$ alongside the momentum $p$. They incur an auxiliary footprint of $2N$ scalars, identical to Adam's auxiliary storage. Originally introduced for scalar friction~\citep{KaLeSt2023},iKFAD allows friction-based adaptation to individual parameters while matching Adam's performance on standard vision and NLP benchmarks. This suggests an apparent gap: Adafactor obtains its saving by factoring the object Adam stores in the position equation, yet no available counterpart exists for the object that iKFAD stores in the momentum equation. Figure~\ref{fig:quadrant} arranges the four methods discussed in this paper. The lower-right cell is the new method presented here:  Rank-1 iKFAD (R-iKFAD) applies Adafactor's factorisation to the friction tensor, reducing iKFAD's auxiliary state from $2N$ to $N + \sum_\ell (m_\ell + n_\ell)$ scalars while retaining full momentum.

\section{Methodology: Rank-1 iKFAD}\label{sec:method}

\noindent Adafactor compresses Adam's second moment while Rank-1 iKFAD (R-iKFAD) compresses iKFAD's friction. Writing both as continuous-time systems for a matrix weight $X \in \mathbb{R}^{m\times n}$ clarifies this correspondence, in the same sense that the systems above approximate Adam~\citep{DaGa2020} and iKFAD~\citep{Ka2024}. Adafactor implementations generally omit momentum.  We retain this momentum $P$ and refer to the resulting scheme as \textbf{Adafactor-m} (left).   (Retaining momentum is deliberate as Adafactor is known to suffer training instabilities in certain transformer regimes when momentum is discarded entirely~\citep{ShSt2018}.)   Both continuous systems use a constant second-moment linear decay coefficient $\alpha$ to be directly comparable. The discrete Adafactor-m baseline follows the schedule $\beta_{2,t}=1-t^{-0.8}$ (Appendix~\ref{app:hyperparams}). R-iKFAD (right) maintains the same momentum component and factors the friction matrix instead:

\noindent
\begin{minipage}[t]{0.49\textwidth}
\begin{align}
    \dot{X} &= \frac{P}{\sqrt{\hat{\zeta}} + \epsilon}, \\[1pt]
    \dot{P} &= -\nabla f(X) - \gamma P, \\[1pt]
    \dot{R} &= [\nabla f(X)]^2\, \mathbf{1}_n - \alpha R, \\[1pt]
    \dot{C} &= ([\nabla f(X)]^2)^\top \mathbf{1}_m - \alpha C.
\end{align}
\end{minipage}\hfill
\begin{minipage}[t]{0.47\textwidth}
\begin{align}
\dot{X} &= P, \label{eq:rikfad_x}\\
\dot{P} &= -\nabla f(X) - \tilde{\xi} \odot P - \gamma P,\\
\dot{R} &= \frac{1}{\mu} [P]^2 \mathbf{1}_n - \alpha R,\\
\dot{C} &= \frac{1}{\mu} ([P]^2)^\top \mathbf{1}_m - \alpha C. \label{eq:rikfad_C}
\end{align}
\end{minipage}

\vspace{0.4em}
\noindent Here $\odot$ is the element-wise product, $[P]^2_{ij}=P_{ij}^2$, and the divisions in the position equations are element-wise. Both systems carry the same linear damping $\gamma$. Each builds a rank-1 object from its factors,
\begin{equation}
\hat{\zeta} = \frac{RC^\top}{\mathbf{1}_m^\top R},
\qquad
\tilde{\xi}_{ij} = \frac{R_i C_j}{\mathbf{1}_m^\top R + \epsilon_{\mathrm{stab}}},
\label{eq:xi_def}
\end{equation}
with $\epsilon,\epsilon_{\mathrm{stab}}>0$ small regularization coefficients, the latter ensuring numerical stability near $R=0$. 

Adafactor-m is thus the Adam system with the full second moment $\zeta$ replaced by the rank-1 estimate $\hat{\zeta}$, and R-iKFAD is the iKFAD system with the full friction tensor $\xi$ replaced by the rank-1 estimate $\tilde{\xi}$. This is illustrated more concisely in Figure~\ref{fig:quadrant}. The two methods differ in the same way Adam differs from iKFAD: Adafactor-m's factors aggregate the squared \emph{gradients} and act in the position equation, whereas R-iKFAD's aggregate the squared \emph{momenta} and act as damping in the momentum equation. In both cases the adapted state per layer (the second moment or the friction) falls from $\mathcal{O}(mn)$ to $\mathcal{O}(m+n)$, while full momentum evolution is retained. We emphasize that R-iKFAD is designed to maintain parity with iKFAD (which matches Adam~\citep{KaRaLe+2026} in performance), rather than outperforming it in terms of convergence.

\begin{figure}[t]
\centering
\begin{tikzpicture}[font=\small,
  cell/.style={draw=black!25, rounded corners=2pt, text width=4.25cm,
               minimum height=2.45cm, align=center, inner sep=7pt,
               anchor=north west},
  star/.style={cell, draw=ikfadred, line width=0.9pt, fill=ikfadred!6},
  colh/.style={font=\small\bfseries, align=center, anchor=south},
  rowh/.style={font=\small, align=right, text width=2.2cm, anchor=east}]

  \def\cw{5.65}\def\ch{2.75}

  \node[cell] (adam) at (0,0) {\textbf{Adam}\\[3pt]
      $\zeta \in \mathbb{R}^{m \times n}$\\[2pt]
      {\footnotesize full second moment}\\[3pt]
      {\footnotesize auxiliary state $2N$}};
  \node[cell] (adaf) at (\cw,0) {\textbf{Adafactor-m}\\[3pt]
      $\hat{\zeta} = \dfrac{R C^{\top}}{\mathbf{1}_m^{\top} R}$\\[2pt]
      {\footnotesize auxiliary state}\\ {\footnotesize $N{+}\sum_\ell(m_\ell{+}n_\ell)$}};
  \node[cell] (ikfad) at (0,-\ch) {\textbf{iKFAD}\\[3pt]
      $\xi \in \mathbb{R}^{m \times n}$\\[2pt]
      {\footnotesize full friction tensor}\\[3pt]
      {\footnotesize auxiliary state $2N$}};
  \node[star] (rik) at (\cw,-\ch) {\textbf{R-iKFAD} (ours)\\[3pt]
      $\tilde{\xi} = \dfrac{R C^{\top}}{\mathbf{1}_m^{\top} R + \epsilon_{\mathrm{stab}}}$\\[2pt]
      {\footnotesize auxiliary state}\\ {\footnotesize $N{+}\sum_\ell(m_\ell{+}n_\ell)$}};

  \node[colh] at ([yshift=3pt]adam.north) {full tensor,\\$\mathcal{O}(mn)$ per layer};
  \node[colh] at ([yshift=3pt]adaf.north) {rank-1 factored,\\$\mathcal{O}(m{+}n)$ per layer};

  \node[rowh] at ([xshift=-7pt]adam.west) {adapts the\\\textbf{position}\\equation $\dot{X}$};
  \node[rowh] at ([xshift=-7pt]ikfad.west) {adapts the\\\textbf{momentum}\\equation $\dot{P}$};

  \draw[-{Latex[length=2mm]}, black!45] (adam.east) -- (adaf.west);
  \draw[-{Latex[length=2mm]}, ikfadred] (ikfad.east) -- (rik.west);
\end{tikzpicture}
\caption{The same rank-1 compression applied in two different equations. Adafactor-m factorizes Adam's second moment $\zeta$ which scales the position update while R-iKFAD factorizes iKFAD's friction $\xi$ which damps the momentum update. Both reduce the adapted state per layer from $\mathcal{O}(mn)$ to $\mathcal{O}(m+n)$ while keeping full momentum, so total auxiliary memory falls from $2N$ to roughly $N$.}
\label{fig:quadrant}
\end{figure}

\paragraph{Splitting Discretization.}
To construct a stable discrete-time algorithm, we turn to principles of geometric numerical integration of Hamiltonian systems~\citep{LeRe2005}, which have proved a robust foundation for discretization of dynamics-based optimisation schemes. Following~\citet{KaRaLe+2026}, we decompose the vector field of \eqref{eq:rikfad_x}--\eqref{eq:rikfad_C} into four components, three of these being directly integrable.  The fourth is easily approximated  by an additional splitting step.   Specifically we write:

\begin{equation}
\begin{pmatrix} \dot{X} \\ \dot{P} \\ \dot{R} \\ \dot{C} \end{pmatrix}
=
\underbrace{\begin{pmatrix} P \\ 0 \\ 0 \\ 0 \end{pmatrix}}_{\mathrm{A}}
+
\underbrace{\begin{pmatrix} 0 \\ -\nabla f(X) \\ 0 \\ 0 \end{pmatrix}}_{\mathrm{B}}
+
\underbrace{\begin{pmatrix}
  0 \\
  -\dfrac{R C^\top}{\mathbf{1}_m^\top R + \epsilon_{\mathrm{stab}}} \odot P \\[6pt]
  \dfrac{1}{\mu}[P]^2\mathbf{1}_n - \alpha R \\[6pt]
  \dfrac{1}{\mu}([P]^2)^\top\mathbf{1}_m - \alpha C
\end{pmatrix}}_{\mathrm{C}}
+
\underbrace{\begin{pmatrix} 0 \\ -\gamma P \\ 0 \\ 0 \end{pmatrix}}_{\mathrm{D}}.
\end{equation}
Sub-flows A, B, and D can be integrated analytically: A gives position drift, B applies the gradient kick, D applies linear friction (becoming the identity when $\gamma=0$, i.e., R-iKFAD0). For step C, we damp the momentum for half a step using the current factors, update the factors for a full step with momentum held fixed, then damp the momentum for another half step using the updated factors.
The composition applied in practice is a first-order Lie--Trotter splitting:
\[
  \Phi_h = \Phi_h^{\mathrm{D}} \circ \Phi_h^{\mathrm{C}} \circ
           \Phi_h^{\mathrm{A}} \circ \Phi_h^{\mathrm{B}},
\]
where sub-steps are executed in the sequence $\mathrm{B} \to \mathrm{A} \to \mathrm{C} \to \mathrm{D}$ (BACD)~\citep{KaRaLe+2026}. 
Position updates use the gradient-kicked momentum $P^{(B)}$ rather than $P_n$. Table~\ref{tab:splitting_updates} in Appendix~\ref{app:splitting} gives the explicit analytical update rules for each sub-step across the complete BACD sequence over time step $h$, together with the treatment of the coupling inside sub-flow~C.

\paragraph{Memory Complexity.}
For a network whose $\ell$th parameter matrix is $W_\ell \in \mathbb{R}^{m_\ell \times n_\ell}$, let $N = \sum_\ell m_\ell n_\ell$ denote the total number of scalar model parameters. Figure~\ref{fig:quadrant} breaks down the auxiliary optimizer memory footprint required on top of the model weights. For non-2D parameters (biases, LayerNorm scales) that are not rank-1 factorizable, R-iKFAD reduces to element-wise iKFAD, contributing a negligible additional scalar per parameter. R-iKFAD requires approximately $N$ auxiliary scalars in total, effectively halving the optimizer auxiliary memory footprint relative to both iKFAD and Adam.   Adafactor-m has a similar auxiliary memory requirement ${\approx}N$
 and is therefore the primary baseline in our experiments.

\FloatBarrier
\section{Theoretical Results}

We next discuss our analysis of the continuous-time system \eqref{eq:rikfad_x}--\eqref{eq:rikfad_C} in both damping regimes. This section states the results and reports the numerical verification. The proofs are in Appendix~\ref{app:proofs}. 

For $\gamma>0$, a Lyapunov argument gives exponential convergence under strong convexity.

\begin{theorem}[Exponential convergence for $\gamma>0$]\label{thm:rk1_exp}
Suppose that $f$ is $m_f$-strongly convex and has an
$L_f$-Lipschitz gradient. Let
$\gamma,\alpha,\mu,\epsilon_{\mathrm{stab}}>0$, and let $R(0)$ and
$C(0)$ be componentwise nonnegative. Then there are constants
$M,\rho>0$ such that
\begin{equation}
 f(X(t))-f(X^*)+\|X(t)-X^*\|_F^2+\|P(t)\|_F^2
 +\|R(t)\|_2^2+\|C(t)\|_2^2
 \leq M e^{-\rho t}
\label{eq:positive_gamma_result}
\end{equation}
for every $t\geq0$.
\end{theorem}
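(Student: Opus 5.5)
The plan is a Lyapunov argument in two stages: first show the mechanical part $(X,P)$ decays exponentially, then propagate that decay to the factors $R,C$ through their linear equations.

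\textbf{Invariance and nonnegativity.} Since $\dot R = \mu^{-1}[P]^2\mathbf{1}_n - \alpha R$ has a nonnegative forcing term, variation of constants gives
\[
R(t) = e^{-\alpha t}R(0) + \mu^{-1}\int_0^t e^{-\alpha(t-s)}[P(s)]^2\mathbf{1}_n\,\diff s \ge 0,
\]
and the same holds for $C$. Hence $\tilde\xi_{ij}\ge 0$ for all time, and the denominator satisfies $\mathbf{1}_m^\top R+\epsilon_{\mathrm{stab}}\ge \epsilon_{\mathrm{stab}}>0$, so the vector field is locally Lipschitz.

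\textbf{Energy bounds and global existence.} Define the energy $E = f(X)-f(X^*)+\tfrac12\|P\|_F^2$. Along solutions,
\[
\dot E = -\gamma\|P\|_F^2 - \sum_{ij}\tilde\xi_{ij}P_{ij}^2 \le -\gamma\|P\|_F^2 .
\]
So $E$ is nonincreasing, and $X$, $P$ stay bounded. The linear equations for $R$ and $C$ with bounded forcing then keep $R$ and $C$ bounded, which gives global existence. The bound on $\|P\|_F$ also yields $\|R(t)\|_1 \le \|R(0)\|_1 + \mu^{-1}E(0)\cdot 2/\alpha$. Since $\tilde\xi_{ij}\le \|C\|_\infty\,\|R\|_1/(\|R\|_1+\epsilon_{\mathrm{stab}}) \le \|C\|_\infty$, we get a uniform bound $\tilde\xi_{ij}\le \bar\xi$ for all $t$, with $\bar\xi$ depending only on the initial data.

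\textbf{Exponential decay of $(X,P)$.} Use the standard cross-term functional for damped Hamiltonian systems,
\[
\mathcal L = E + \eta\langle X-X^*,P\rangle_F ,
\]
with $\eta>0$ small. Then
\[
\frac{\diff}{\diff t}\langle X-X^*,P\rangle = \|P\|^2 - \langle X-X^*,\nabla f(X)\rangle - \langle X-X^*,(\tilde\xi+\gamma)\odot P\rangle .
\]
Strong convexity gives $\langle X-X^*,\nabla f\rangle \ge f(X)-f(X^*)+\tfrac{m_f}{2}\|X-X^*\|^2$. The friction cross term is bounded by $(\bar\xi+\gamma)\|X-X^*\|\,\|P\|$, and Young's inequality absorbs it into $\tfrac{m_f}{4}\|X-X^*\|^2 + c\|P\|^2$. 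Choosing $\eta$ small relative to $\gamma$, $m_f$, $L_f$ and $\bar\xi$ gives:
\begin{itemize}
\item $\mathcal L$ is equivalent to $f-f^*+\|X-X^*\|^2+\|P\|^2$ (using $L_f$-smoothness and strong convexity);
\item $\dot{\mathcal L}\le -\rho_1\mathcal L$.
\end{itemize}
Hence $f-f^*+\|X-X^*\|^2+\|P\|^2\le M_1e^{-\rho_1 t}$.

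This is the main obstacle. The friction is state-dependent and only bounded a priori through the earlier step, so $\rho_1$ and $\eta$ must be chosen using the global bound $\bar\xi$ rather than local information. The role of $\gamma>0$ is essential here: it provides dissipation $-\gamma\|P\|^2$ independent of $\tilde\xi$, which may be tiny.

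\textbf{Decay of the factors.} From the variation-of-constants formula,
\[
\|R(t)\|_2 \le e^{-\alpha t}\|R(0)\|_2 + \frac{\sqrt m}{\mu}\int_0^t e^{-\alpha(t-s)}M_1e^{-\rho_1 s}\,\diff s ,
\]
which is $\lesssim e^{-\min(\alpha,\rho_1)t}$, with a factor of $t$ if the two rates are equal; taking the rate slightly smaller absorbs it. The same argument applies to $C$. Squaring and adding to the previous bound gives \eqref{eq:positive_gamma_result} with $\rho=\min(\rho_1,\,2\min(\alpha,\rho_1)-\delta)$ for any small $\delta>0$, and a suitable constant $M$.
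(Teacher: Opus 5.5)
Your proposal is correct and follows essentially the same route as the paper. Both arguments obtain a uniform friction bound from energy monotonicity and $\tilde\xi_{ij}\le C_j$, use the cross-term functional $f(X)-f(X^*)+\tfrac12\|P\|_F^2+\eta\langle X-X^*,P\rangle_F$ with $\eta$ small relative to $\gamma$, $m_f$ and the friction bound, and then use variation of constants to pass the decay of $\|P\|_F^2$ on to $R$ and $C$.

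The only minor difference is that you keep the $f(X)-f(X^*)$ term from the strong-convexity inequality in $\dot{\mathcal L}$. This makes the $L_f$ bound unnecessary for $\dot{\mathcal L}\le-\rho_1\mathcal L$. The paper instead uses only $\langle X-X^*,\nabla f(X)\rangle_F\ge m_f\|X-X^*\|_F^2$, so it needs $L_f$ to bound its Lyapunov function from above.
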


In our experiments the $\gamma=0$ and $\gamma>0$ variants performed very similarly, and R-iKFAD0 is preferable since it has one fewer hyperparameter to optimize, which makes the sweep easier. With $\gamma=0$ the overall adaptive friction remains non-negative but can become small with the momentum. Proving convergence in this setting is challenging, but we have succeeded in doing so under strong convexity.  

\begin{theorem}[Convergence when $\gamma=0$]\label{thm:g0_conv}
Suppose that $f$ is strongly convex and has a locally Lipschitz gradient. Let $\alpha,\mu,\epsilon_{\mathrm{stab}}>0$, and suppose that $R(0)$ and $C(0)$ are componentwise nonnegative. Then every solution of \eqref{eq:rikfad_x}--\eqref{eq:rikfad_C} with $\gamma=0$ converges to
\[
(X,P,R,C)=(X_\star,0,0,0),
\]
where $X_\star$ is the unique minimizer of $f$.
\end{theorem}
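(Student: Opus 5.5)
We plan a LaSalle-type argument built on the mechanical energy $H(X,P)=f(X)-f(X_\star)+\tfrac12\|P\|_F^2$. With $\gamma=0$, along solutions
\[
\dot H=-\sum_{i,j}\tilde{\xi}_{ij}P_{ij}^2\le 0 .
\]

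\textbf{Nonnegativity and boundedness.} We first show that $R,C$ stay componentwise nonnegative. Each component satisfies $\dot R_i\ge-\alpha R_i$, and likewise for $C_j$, so nonnegativity follows from the variation-of-constants formula. Hence $\tilde{\xi}\ge0$ (the denominator is at least $\epsilon_{\mathrm{stab}}>0$), so $H$ is nonincreasing. Strong convexity then makes the sublevel set of $H$ compact, which bounds $X$ and $P$. Given $\|P\|_F^2\le 2H(0)$, the linear equations for $R,C$ keep these bounded as well, e.g.\ $\limsup \mathbf{1}_m^\top R\le \|P\|^2_{\infty}/(\mu\alpha)$. Local Lipschitzness of $\nabla f$, together with the smoothness of $\tilde{\xi}$ on the nonnegative orthant, gives global existence and uniqueness on this compact forward-invariant set. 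The omega-limit set $\Omega$ is therefore nonempty, compact, invariant and connected.

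\textbf{Invariance principle.} Since $H$ is monotone and bounded below, it converges to some $H_\infty$, and $H\equiv H_\infty$ on $\Omega$. On any complete orbit in $\Omega$ we therefore have $\sum_{ij}\tilde{\xi}_{ij}P_{ij}^2\equiv0$. The main obstacle is that this does not directly force $P=0$: the friction is generated only by $P$ itself and vanishes when $R$ or $C$ vanish, so there is no uniform dissipation. We resolve this through the factor dynamics, in two steps.

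First, we show $P\equiv0$ on $\Omega$. Suppose instead that on an invariant orbit in $\Omega$ some $P_{ij}(t_0)\neq0$. Then $P_{ij}\ne0$ on an interval around $t_0$, so $\dot R_i\ge P_{ij}^2/\mu-\alpha R_i$ forces $R_i>0$ shortly after $t_0$. We get $C_j>0$ in the same way. Hence $\tilde{\xi}_{ij}>0$ and $\tilde{\xi}_{ij}P_{ij}^2>0$ at some time, contradicting the identity above.

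Second, with $P\equiv0$ we have $\dot P=-\nabla f(X)\equiv0$, so $X=X_\star$ by strong convexity. The $R,C$ equations then reduce to $\dot R=-\alpha R$ and $\dot C=-\alpha C$. A complete bounded orbit of this linear decay, defined for all negative times as well, must be identically zero, so $R=C=0$. Thus $\Omega=\{(X_\star,0,0,0)\}$, and the whole solution converges to this point.

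The step we expect to be most delicate is the positivity argument above. It must be carried out on $\Omega$, which requires invariance of $\Omega$ under the flow; this holds because the vector field is locally Lipschitz on the compact absorbing set. The argument also needs care at boundary points where $R_i=0$ or $C_j=0$. If this causes trouble, a fallback is Barbalat's lemma. Since $\int_0^\infty\sum\tilde{\xi}_{ij}P_{ij}^2\,dt\le H(0)$ and the integrand is uniformly continuous, the integrand tends to $0$, and we would then argue by contradiction with the lower growth of $R_i$ and $C_j$ whenever $|P_{ij}|$ stays away from zero over a time window of fixed length.
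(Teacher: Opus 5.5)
Your proposal is correct and follows essentially the same route as the paper. Both arguments use monotonicity of $\mathcal H$, precompactness from strong convexity together with the variation-of-constants formulas for the factors, and a LaSalle step in which $P_{ij}\neq0$ on an interval forces $R_i,C_j>0$ and hence strictly positive dissipation. The only difference is at the end: you remove $R,C$ on the $\omega$-limit set by backward boundedness of $\dot R=-\alpha R$, whereas the paper first obtains $X\to X_\star$, $P\to0$ and then reads off $R,C\to0$ from the filter formulas. Both are valid, and the Barbalat fallback is not needed.
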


The decay at $\gamma=0$ is algebraic rather than exponential, meaning that the error falls like $t$ to some power. The power depends on whether or not a stabilizing parameter $\epsilon_{\mathrm{stab}}$ is included.  Both this theorem and the numerical evidence that follows are stated in terms of the energy
\begin{equation}
\mathcal{E}(t)=f(X(t))-f(X^*)+\tfrac12\lVert P(t)\rVert_F^2,
\end{equation}
the sum of the objective gap and the kinetic energy term, which is nonnegative and vanishes exactly at $(X^*,0)$.

\begin{theorem}[Algebraic rates]\label{thm:rates}
Suppose $f$ is strongly convex, its gradient is locally Lipschitz,
and it is $C^2$ in a neighbourhood of $X^*$. Let
$\gamma=0$ and $\alpha,\mu>0$, and let the initial factors be component-wise nonnegative.

If $\epsilon_{\mathrm{stab}}>0$ and $\mathcal{E}(0)>0$, there are $a,A>0$
such that
\[
 \frac a{\sqrt{1+t}}\le \mathcal{E}(t)\le\frac A{\sqrt{1+t}},
 \qquad t\ge0.
\]
If $\epsilon_{\mathrm{stab}}=0$, assume in addition that
$R(0)=C(0)=0$, and use the convention $\tilde{\xi}=0$ at $\mathbf{1}_m^\top R=0$.
If $\mathcal{E}(0)>0$, there are $b,B>0$ such that
\[
 \frac b{1+t}\le \mathcal{E}(t)\le\frac B{1+t},
 \qquad t\ge0.
\]
\end{theorem}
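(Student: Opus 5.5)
Both rates come from one two-sided differential inequality for $\mathcal{E}$. We combine it with a separate estimate of the friction size in terms of the accumulated kinetic energy. Along solutions with $\gamma=0$,
\[
\dot{\mathcal{E}}=-\sum_{ij}\tilde{\xi}_{ij}P_{ij}^2 ,
\]
so everything hinges on how large $\tilde{\xi}$ is relative to $\mathcal{E}$. By Theorem~\ref{thm:g0_conv} the trajectory eventually enters any neighbourhood of $(X^*,0,0,0)$. There $f$ is $C^2$, and the dynamics is a slowly damped linear oscillator $\ddot{X}\approx -H(X-X^*)$ with $H=\nabla^2 f(X^*)\succ0$. Over each oscillation period, equipartition gives the time-average
\[
\overline{P_{ij}^2}\asymp \mathcal{E},
\]
and more precisely $\overline{\sum_{ij}\tilde{\xi}_{ij}P_{ij}^2}\asymp \|\tilde{\xi}\|\,\mathcal{E}$. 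Since $\dot{\mathcal{E}}\le0$, $\mathcal{E}$ is monotone, so period averages can be compared with pointwise values. To avoid averaging arguments I would add a small cross term $\delta\langle X-X^*,P\rangle$ to $\mathcal{E}$ (weight $\delta$ tied to the current friction size), as in the proof of Theorem~\ref{thm:rk1_exp}. This gives a modified energy $\tilde{\mathcal{E}}\asymp\mathcal{E}$ with
\[
-c_2\,\phi\,\tilde{\mathcal{E}}\le\dot{\tilde{\mathcal{E}}}\le -c_1\,\phi\,\tilde{\mathcal{E}},
\qquad \phi:=\max_{ij}\tilde{\xi}_{ij}\ \text{(equivalently $\|R\|\|C\|/(\mathbf{1}^\top R+\epsilon_{\mathrm{stab}})$)}.
\]
The lower bound $\dot{\mathcal{E}}\ge -\phi\|P\|^2\ge -2\phi\mathcal{E}$ is immediate. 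The upper bound needs the cross term, plus the fact that the directions of $\tilde{\xi}$ do not degenerate, i.e.\ $R_iC_j\gtrsim \|R\|\|C\|$ entrywise. That nondegeneracy I would obtain by showing that $R$ and $C$ stay comparable to their averages of $\|P\|^2$ once the oscillation spreads energy over all coordinates.

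The second step is the friction estimate. Each factor obeys a linear filtered equation, so
\[
R(t)=e^{-\alpha t}R(0)+\mu^{-1}\int_0^t e^{-\alpha(t-s)}[P]^2\mathbf{1}\,ds .
\]
Because $\mathcal{E}$ decays only algebraically (slowly compared to $e^{-\alpha t}$), this gives $\|R\|,\|C\|\asymp \mathcal{E}/(\alpha\mu)$ for large $t$, again using period-averaged $\|P\|^2\asymp\mathcal{E}$. The two cases then separate through the denominator of $\tilde{\xi}$.

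\textbf{Case $\epsilon_{\mathrm{stab}}>0$.} Here $\phi\asymp \mathcal{E}^2/\epsilon_{\mathrm{stab}}$, so $\dot{\mathcal{E}}\asymp-\mathcal{E}^3$, which integrates to $\mathcal{E}\asymp t^{-1/2}$.

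\textbf{Case $\epsilon_{\mathrm{stab}}=0$.} Here $\phi\asymp\|C\|\asymp\mathcal{E}$, because the ratio $R_i/\mathbf{1}^\top R$ is bounded. This gives $\dot{\mathcal{E}}\asymp-\mathcal{E}^2$ and hence $\mathcal{E}\asymp t^{-1}$. The hypothesis $R(0)=C(0)=0$ (with the stated convention at $\mathbf{1}^\top R=0$) ensures that the transient $e^{-\alpha t}R(0)$ does not distort the ratio, and that the solution is well defined: $\mathbf{1}^\top R>0$ for $t>0$ unless $P\equiv0$, which is excluded by $\mathcal{E}(0)>0$ together with uniqueness. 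On a compact initial time interval, both bounds hold trivially once the constants are adjusted, because $\mathcal{E}$ is continuous and positive. Positivity holds since the lower differential inequality prevents finite-time extinction.

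The main obstacle is making the upper inequality $\dot{\mathcal{E}}\le -c\,\phi\,\mathcal{E}$ rigorous. Pointwise, $\sum\tilde{\xi}P^2$ can vanish whenever $P=0$ at turning points, and $\tilde{\xi}$ is only rank one, so some coordinates may be weakly damped. I would first try the averaged-Lyapunov construction: add $\delta(t)\langle X-X^*,P\rangle$ with $\delta\ll\sqrt{\lambda_{\min}(H)}$ and $\delta\gtrsim\phi$, which converts kinetic dissipation into dissipation of the full energy. Entrywise positivity of $R$ and $C$ should follow from the integral formula together with the fact that undamped linear oscillation of a nondegenerate $H$ cannot keep any row or column of $P$ identically small over a period. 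If that fails, I would fall back on a multiscale argument: treat $\phi$ as adiabatically slow compared with the oscillation period $O(1)$, and apply averaging on windows of length $T$ to obtain the same ODE comparison up to $(1+o(1))$ factors.
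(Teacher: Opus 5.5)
\textbf{There is a genuine gap, located where you expected it: the decay (upper) bound on $\mathcal{E}$.}

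\textbf{The nondegeneracy you rely on is false.} Your main route needs $R_iC_j\gtrsim\|R\|\|C\|$ entrywise. You justify it by saying that undamped oscillation under a nondegenerate $H$ cannot keep a row or column of $P$ small, and this is false. Take $m,n\ge2$ and $f(X)=\tfrac12\sum_{i,j}\lambda_{ij}X_{ij}^2$ with all $\lambda_{ij}>0$, so $H$ is diagonal and positive definite. Let $R(0)=C(0)=0$, and let $X(0)$, $P(0)$ be supported on the entry $(1,1)$. The set where all other entries of $X$ and $P$ vanish is invariant. Hence $P_{ij}\equiv0$ for $(i,j)\neq(1,1)$, $R_i\equiv0$ for $i\ge2$ and $C_j\equiv0$ for $j\ge2$. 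So $R_2C_2\equiv0$ while $\|R\|\|C\|>0$. The same example shows that per-entry equipartition $\overline{P_{ij}^2}\asymp\mathcal{E}$ fails; at most the total $\|P\|_F^2$ is comparable to $\mathcal{E}$ on average, and even that needs proof.

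\textbf{Why this breaks the cross-term step.} The cross term $\delta\langle X-X^*,P\rangle_F$ adds $+\delta\|P\|_F^2$ to the derivative. In the $\gamma>0$ proof this is absorbed by the uniform term $-\gamma\|P\|_F^2$. In your setting it must instead be absorbed by friction of order $\delta\gtrsim\phi$ on every entry that carries momentum, which is exactly the false nondegeneracy. Your averaging fallback does not resolve this, because it still has to explain why the dissipation sits where the kinetic energy is.

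\textbf{The missing idea is that the friction automatically sits where the energy is, because of how the factors are built.} First replace equipartition by a rigorous window estimate for the total kinetic energy only: $\int_t^{t+T}\|P\|_F^2\,du\ge c_0\mathcal{E}(t)$ for large $t$. The paper proves this by contradiction. It rescales $(X-X^*,P)$ by $\sqrt{\mathcal{E}(t_k)}$ and passes, via Arzel\`a--Ascoli, to the undamped oscillator $\dot y=p$, $\dot p=-Hy$, whose kinetic energy cannot vanish on a window.

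Then choose one index $(i,j)$ with $G=\int_t^{t+T}P_{ij}^2\ge c_0\mathcal{E}(t)/(mn)$. The same $P_{ij}^2$ feeds both row $i$ and column $j$, so with $G(u)=\int_t^uP_{ij}^2$ the filter formulas give $R_i(u),C_j(u)\ge e^{-\alpha T}G(u)/\mu$. Hence $\int_t^{t+T}R_iC_jP_{ij}^2\,du\ge\frac{e^{-2\alpha T}}{3\mu^2}G^3\gtrsim\mathcal{E}(t)^3$. Divide by $s+\epsilon_{\mathrm{stab}}\le M$, or by $s\le Z_R\mathcal{E}(t)$ when $\epsilon_{\mathrm{stab}}=0$. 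This gives $\mathcal{E}(t)-\mathcal{E}(t+T)\gtrsim\mathcal{E}(t)^3$ or $\mathcal{E}(t)^2$, respectively, and iterating over windows yields the upper bounds. No nondegeneracy, no cross term and no global lower bound $\|R\|\gtrsim\mathcal{E}$ are needed.

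\textbf{Remaining points.} Your lower-bound half, $-\dot{\mathcal{E}}\le2\phi\mathcal{E}$ with $\phi\lesssim\mathcal{E}$ or $\phi\lesssim\mathcal{E}^2/\epsilon_{\mathrm{stab}}$, matches the paper. However, you justify $s\lesssim\mathcal{E}$ by appealing to the algebraic decay you are trying to prove, which is circular. It can be repaired by a scalar comparison for $w=s/\mathcal{E}$, using $\dot{\mathcal{E}}\ge-2\phi\mathcal{E}$ and $\phi\to0$. Finally, Theorem~\ref{thm:g0_conv} assumes $\epsilon_{\mathrm{stab}}>0$. For $\epsilon_{\mathrm{stab}}=0$ you must redo the convergence argument on the invariant set $\{\sum_iR_i=\sum_jC_j\}$, where the convention at $s=0$ makes the friction locally Lipschitz.
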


The two regimes are separated by a crossover when $\mathcal{E}$ is of order $\alpha\mu\epsilon_{\mathrm{stab}}$. Section~\ref{app:g0_formal_rate} gives a formal averaging calculation that identifies the exponents, while Appendix~\ref{app:g0_rates} proves those results, and Section~\ref{app:g0_numerics} verifies them numerically. The two-sided bounds establish \(t^{-1/2}\) as the asymptotic decay rate for every \(\epsilon_{\mathrm{stab}}>0\), motivating a small stabiliser to delay the crossover to this slower regime.

\subsection{Formal rate calculation}\label{app:g0_formal_rate}

We use a formal averaging calculation to estimate the decay. The exponents it produces are rigorously established in Appendix~\ref{app:g0_rates}.
Suppose that $P$ oscillates faster than the factors change. For $\epsilon_{\mathrm{stab}}=0$, start with $\sum_kR_k>0$. Replacing the oscillatory terms by the corresponding averages gives
\[
\dot{\mathcal E}\simeq-\frac{\mathcal E^2}{\alpha\mu}.
\]
The reduced equation admits the solution
\[
\mathcal E(t)\simeq\frac{\alpha\mu}{t}.
\]
When $\epsilon_{\mathrm{stab}}>0$ and the factors are small relative to $\epsilon_{\mathrm{stab}}$, the same calculation gives
\[
\dot{\mathcal E}\simeq
-\frac{\mathcal E^3}{\alpha^2\mu^2\epsilon_{\mathrm{stab}}},
\qquad
\mathcal E(t)\simeq
\alpha\mu\sqrt{\frac{\epsilon_{\mathrm{stab}}}{2t}}.
\]
The two reduced regimes meet when $\mathcal E$ is of order
$\alpha\mu\epsilon_{\mathrm{stab}}$.
This calculation relies on time-scale separation and an averaging closure. Neither assumption is needed for Theorem~\ref{thm:rates}, which obtains the same exponents without them.

\subsection{Numerical validation of proved results}\label{app:g0_numerics}

We tested the dynamics of $\gamma=0$ in a strongly convex quadratic,
$f(X)=\tfrac12 X^\top AX$ with $A=BB^\top\!/d+0.8I$ and $d=mn$, where $B$ has independent
standard normal entries, so the smallest eigenvalue of $A$ is at least $0.8$.  We monitor the
energy $\mathcal E(t)=\tfrac12 X^\top AX+\tfrac12\lVert P\rVert_F^2$. The decay for both stabilization settings may be found in Figure~\ref{fig:g0_decay}. Table~\ref{tab:g0_slopes} gives the slopes of $\log\mathcal E$ against $\log t$, obtained by
ordinary least squares on successive decades out to $T=10^{7}$ for three different matrix sizes. Both
of the powers specified by our theorem are recovered and neither apparently depends on the matrix dimension. 
The approach to
slope $-1/2$ is slower for larger $mn$, and the crossover point illustrates the reason for this: 
the second regime begins only
once $\mathcal{E}\lesssim\alpha\mu\,\epsilon_{\mathrm{stab}}$, and the constant in the
first-regime law grows with $mn$, so the crossing occurs later for larger systems. The
prefactors scale with $\alpha$, $\mu$ and $\epsilon_{\mathrm{stab}}$ just as the calculation of
Section~\ref{app:g0_formal_rate} predicts.    Appendix~\ref{app:g0_prefactors} reports the measured
ratios and Appendix~\ref{app:g0_setup} gives the integrators, tolerances, step-sizes, seeds and
initialization.

\begin{figure}[H]
    \centering
    \includegraphics[width=0.72\linewidth]{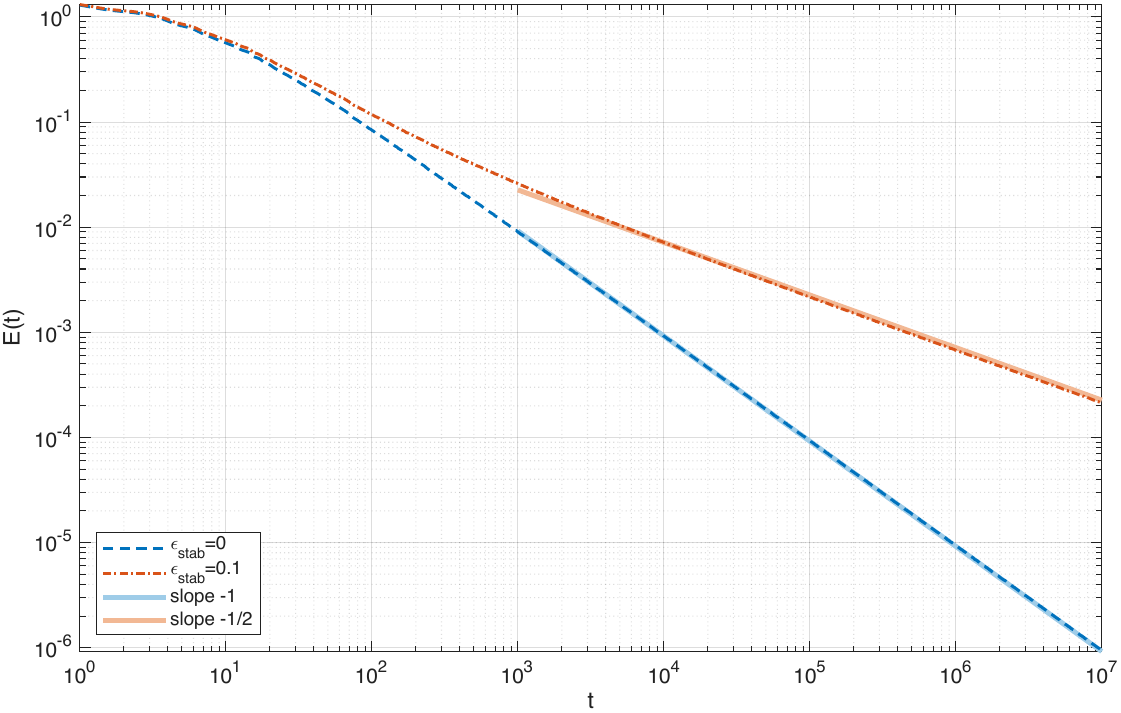}
    \caption{Energy decay from BACD splitting with $m=4$, $n=3$, $\gamma=0$, $\alpha=\mu=1$, $h=0.02$, and final time $10^7$. Both runs have $R(0)=C(0)=0$. The blue dashed curve has $\epsilon_{\mathrm{stab}}=0$, and the orange dash-dotted curve has $\epsilon_{\mathrm{stab}}=0.1$. The solid lines have slopes $-1$ and $-1/2$, respectively.}
    \label{fig:g0_decay}
\end{figure}

\begin{table}[t]
\centering
\caption{Local slopes of $\log\mathcal{E}$ against $\log t$ on successive decades, from the BACD
splitting at $h=0.02$ with $\alpha=\mu=1$, integrated to $T=10^{7}$. Both exponents are independent
of the matrix dimensions.}
\label{tab:g0_slopes}
\small
\begin{tabular}{lcccc}
\toprule
 & $[10^3,10^4]$ & $[10^4,10^5]$ & $[10^5,10^6]$ & $[10^6,10^7]$ \\
\midrule
\multicolumn{5}{l}{$\epsilon_{\mathrm{stab}}=0$ \quad (proved exponent $-1$)} \\
$m=n=2$   & $-0.955$ & $-0.980$ & $-0.986$ & $-0.989$ \\
$m=4,n=3$ & $-0.991$ & $-0.996$ & $-0.996$ & $-0.998$ \\
$m=6,n=5$ & $-0.993$ & $-0.997$ & $-0.998$ & $-0.999$ \\
\midrule
\multicolumn{5}{l}{$\epsilon_{\mathrm{stab}}=0.1$ \quad (proved exponent $-1/2$)} \\
$m=n=2$   & $-0.499$ & $-0.497$ & $-0.496$ & $-0.497$ \\
$m=4,n=3$ & $-0.558$ & $-0.518$ & $-0.505$ & $-0.501$ \\
$m=6,n=5$ & $-0.593$ & $-0.530$ & $-0.509$ & $-0.503$ \\
\bottomrule
\end{tabular}
\end{table}

\subsection{Related Work: Algebraic Rates and Factored Optimisers}

Continuous-time studies with damping such as \citet{SuBoCa2016} and \citet{AtChPe+2018} prescribed as a function of time have yielded algebraic rates. Our results are proven for a different setting given that our damping is determined by evolving factors and can vanish with them. Our damping model is also nonlinear and thus more complicated. In terms of factored schemes,  Adafactor~\citep{ShSt2018} introduced rank-1 factorization of optimizer state. \citet{HoLi2025} analyzed a discrete Adafactor method for non-convex objectives. \citet{NgChLi+2025} introduced H-Fac, which factorizes both the momentum and scaling estimators, and prove asymptotic convergence for its continuous-time dynamics with positive damping. These results do not come near providing rates for $\gamma=0$ situation. KFAD~\citep{KaLeSt2023} and iKFAD~\citep{KaRaLe+2026} were analyzed only in the
linearly damped case $\gamma>0$. Our $\gamma=0$ results carry over to both. In each,
the friction obeys the same non-negative linear filter as the factors $R_i$ and $C_j$:
a single scalar driven by $\|P\|_F^2$ in KFAD, and one entry $\xi_{ij}$ driven by
$P_{ij}^2$ for each parameter in iKFAD. Neither method has a stabilizer, so only the
$t^{-1}$ rate applies.

\section{Numerical experiments}\label{sec:experiments}

We set $\gamma=0$ for both iKFAD and R-iKFAD throughout (denoted \textbf{iKFAD0} and \textbf{R-iKFAD0} respectively), following~\citet{KaRaLe+2026}. In the ablation of Section~\ref{sec:ablation}, adding $\gamma>0$ gave no consistent improvement under our tested procedure. We compare both against Adam and Adafactor-m across four benchmarks spanning problems in image classification and language modelling at scales from 0.8M to 124M parameters:
\begin{itemize}
    \item \textbf{GPT2-Nano (0.8M)}~\citep{Ka2022} on Shakespeare character-level language modelling.
    \item \textbf{TinyViT (4M)}~\citep{WuZhPe+2022} on CIFAR-10~\citep{KrHi+2009} image classification.
    \item \textbf{DistilBERT (67M)}~\citep{SaDeCh+2020} on SST-2~\citep{SoPeWu+2013} sentiment classification (pretraining from scratch).
    \item \textbf{GPT2-S (124M)}~\citep{RaWuCh+2019} on OpenWebText pretraining (nanoGPT architecture)~\citep{Ka2022}.
\end{itemize}

\noindent To focus on the optimization dynamics, we use a constant learning rate, we disable weight decay, and we omit additional regularization throughout. We use Adafactor-m rather than the conventional momentum-free configuration because its auxiliary-memory requirement is similar to that of R-iKFAD. All hyperparameters are tuned via Optuna~\citep{AkSaYa+2019} (TPE sampler) with equal sweep budgets of approximately 80 trials per optimiser per model. GPT2-Nano and TinyViT received substantially more trials. The stabilizer is fixed at $\epsilon_{\mathrm{stab}}=10^{-16}$ in every run and is never tuned. See Appendix~\ref{app:hyperparams} for full configurations. As with iKFAD, the optimal scale of $\mu$ varies by several orders of magnitude between tasks due to differing momentum scales. We therefore recommend a wide-range initial sweep over $\mu$. Note that Adafactor-m requires tuning only two hyperparameters, whereas Adam, iKFAD0, and R-iKFAD0 all require three. The fixed trial budget hence favours Adafactor-m's results disproportionately. All benchmarks report mean $\pm$ standard deviation over 10 seeds, except GPT2-S, which uses a single seed due to computational constraints. The optimizer auxiliary state is the memory an optimizer adds on top of the model parameters. Table~\ref{tab:memory} in Appendix~\ref{app:memory} reports the absolute values on the four benchmarks.

\begin{figure}[t]
    \centering
    \includegraphics[width=\linewidth]{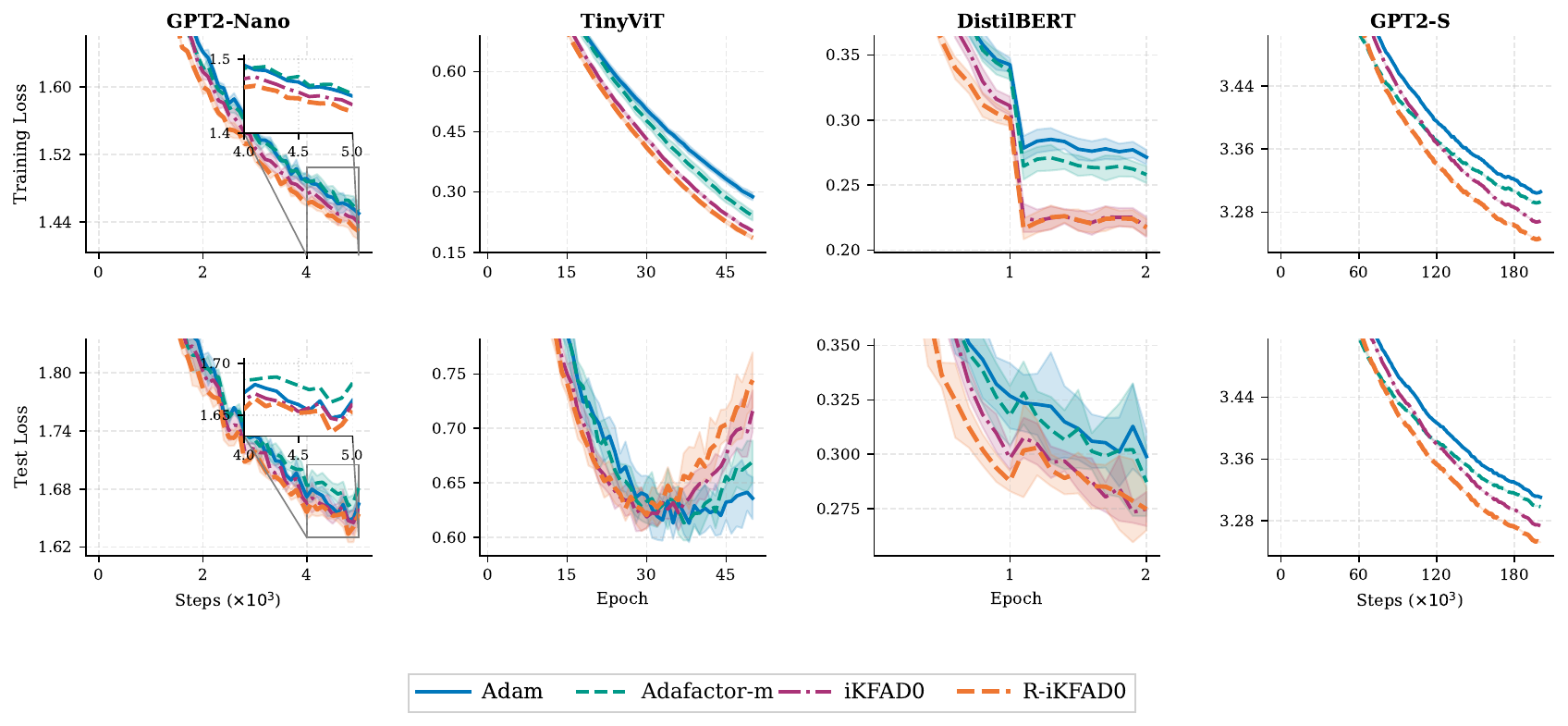}
    \caption{Loss curves for all four benchmarks, averaged over 10 random seeds (GPT2-S: a single seed). R-iKFAD0 matches or improves upon iKFAD0 on all tasks, while using approximately half the optimizer memory. Both methods are competitive with Adam and Adafactor-m, which require comparable or greater memory. Shaded regions denote one standard deviation.}
    \label{fig:losses}
\end{figure}

\subsection{Main Results}

Figure~\ref{fig:losses} shows training and test loss across all benchmarks, and Figure~\ref{fig:acc} in Appendix~\ref{app:accuracy} the corresponding accuracies. Table~\ref{tab:results} reports the test loss of the model saved at the lowest validation loss.  \citet{KaRaLe+2026} already established that iKFAD matches Adam on standard benchmarks, and Table~\ref{tab:results} reproduces that result here.  iKFAD0 equals or improves on Adam on three of the four benchmarks, the exception being TinyViT, by about one pooled standard deviation. Inspecting the loss curves for TinyViT, both iKFAD0 and R-iKFAD0 reach their minimum faster than Adam, which later attains a lower loss. 

Moreover, R-iKFAD0 matches iKFAD0 on all four benchmarks while halving the auxiliary optimizer state (Table~\ref{tab:memory}). Taken together, R-iKFAD0 performs like Adam at roughly half Adam's optimizer memory. These differences should be interpreted in light of the variability across seeds. On the three benchmarks with seed replication, iKFAD0 and R-iKFAD0 differ by less than one pooled standard deviation, whereas on TinyViT they agree to three decimal places. Where the friction methods do separate from Adam, the difference depends on the particular dataset: on DistilBERT they lead by about five pooled standard deviations, and on GPT2-S by $0.06$ nats at a single seed, whereas on the one vision task Adam leads by about one. Given how small these differences are, we do not suggest that R-iKFAD0 outperforms Adam. Rather, the statement is that the friction tensor can be rank-1 factored without compromising performance.

\begin{table}[h]
\centering
\caption{Test loss at the checkpoint selected by lowest \emph{validation} loss (mean$\pm$std over 10 seeds, GPT2-S: single seed due to computational limitations). The test set was never used for model or hyper-parameter selection. Lower is better. R-iKFAD0 matches or improves upon iKFAD0 across all benchmarks while using half the optimizer memory.}
\label{tab:results}
\small
\begin{tabular}{lrrrr}
\toprule
Dataset & Adam & Adafactor-m & iKFAD0 & R-iKFAD0 (ours) \\
\midrule
GPT2-Nano   & $1.643\pm0.008$ & $1.661\pm0.007$ & $1.640\pm0.008$ & $\mathbf{1.633\pm0.011}$ \\
TinyViT     & $\mathbf{0.593\pm0.015}$ & $0.596\pm0.010$ & $0.606\pm0.011$ & $0.606\pm0.012$ \\
DistilBERT  & $0.292\pm0.005$ & $0.280\pm0.007$ & $0.269\pm0.005$ & $\mathbf{0.268\pm0.003}$ \\
GPT2-S      & $3.301$ & $3.287$ & $3.263$ & $\mathbf{3.242}$ \\
\bottomrule
\end{tabular}
\end{table}

\subsection{Ablation: \texorpdfstring{$\gamma=0$ versus $\gamma > 0$}{gamma=0 versus gamma > 0}}\label{sec:ablation}

Figure~\ref{fig:ablation} compares R-iKFAD ($\gamma>0$, with linear damping) against R-iKFAD0 ($\gamma=0$, without linear damping) on GPT2-Nano, TinyViT, and DistilBERT (SST-2). Adding the $\gamma$ term gave no consistent improvement over $\gamma=0$ on these benchmarks under the tested procedure. This is consistent with the finding in~\citet{KaRaLe+2026} that the adaptive friction mechanism alone provides sufficient damping. We therefore recommend \textbf{R-iKFAD0} ($\gamma = 0$) as the practical default since it has one fewer hyper-parameter. For the $\gamma>0$ variant, we used the same $h$, $\alpha$, and $\mu$ settings as R-iKFAD0 given in Appendix~\ref{app:hyperparams} and additionally swept $\gamma$ log-uniformly over $[10^{-6}, 10]$.

\begin{figure}[t]
    \centering
    \includegraphics[width=0.75\linewidth]{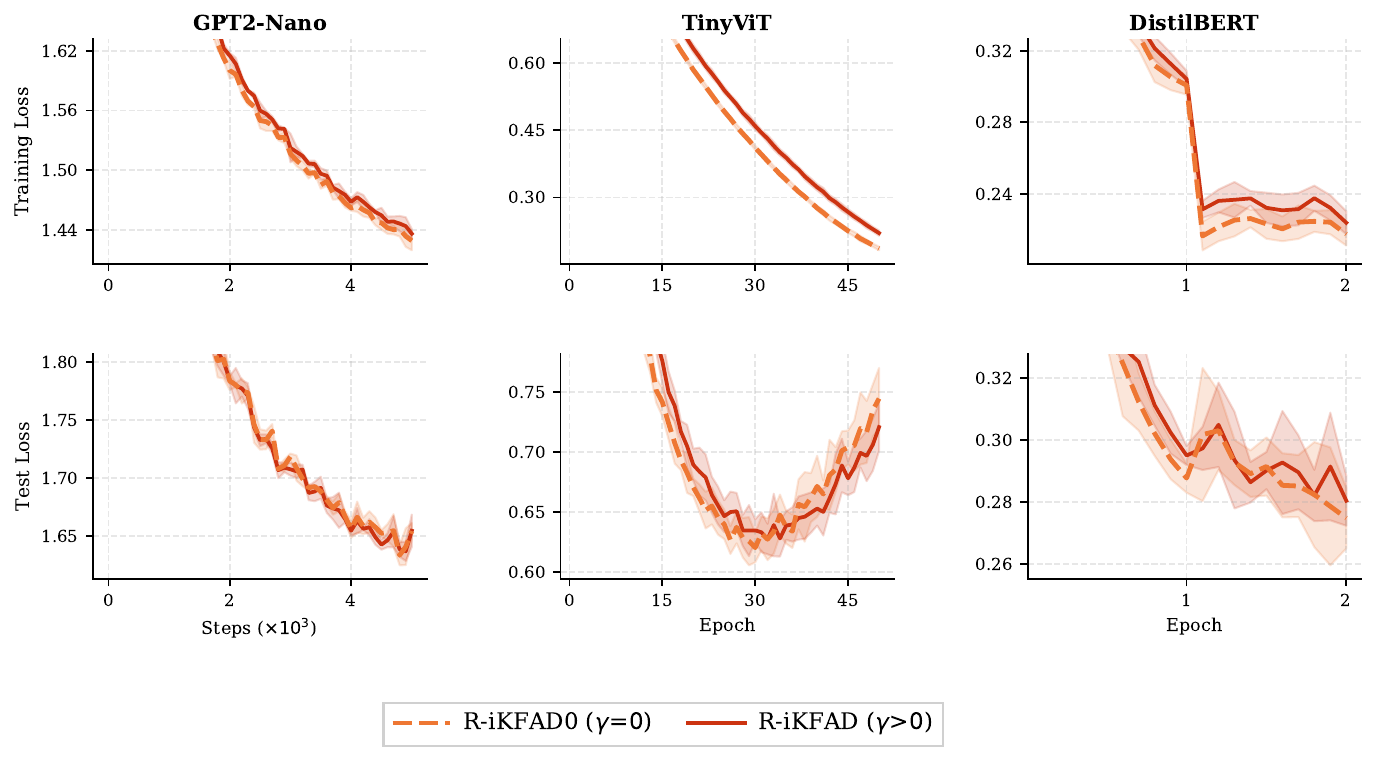}
    \caption{Training and test loss for R-iKFAD ($\gamma > 0$) vs.\ R-iKFAD0 ($\gamma = 0$) on GPT2-Nano, TinyViT, and DistilBERT (SST-2), averaged over 10 seeds (shaded = std). The two variants are indistinguishable in performance, supporting R-iKFAD0 as the recommended default.}
    \label{fig:ablation}
\end{figure}

\subsection{Robustness to Hyperparameters}

Figure~\ref{fig:gammah} shows $\gamma$-$h$ grid sweeps for R-iKFAD and iKFAD. Both methods exhibit broad regions of the tested grid with similar performance. In the R-iKFAD equations, an increase in momentum increases the factors $R$ and $C$, which increases $\tilde{\xi}$ and hence the damping. In these sweeps, the rank-1 approximation did not substantially reduce the range of $(\gamma,h)$ values giving low loss. The breadth of the plateau is a consequence of the feedback being negative: a step size large enough to inflate the momentum also inflates $R$ and $C$, and thus $\tilde{\xi}$, so the damping increases proportionally. The same mechanism explains why $\gamma$ adds little in the ablation study of Section~\ref{sec:ablation}: a constant damping term acts along a direction which the adaptive term already covers, but without responding to the change of state.

\begin{figure}[!h]
    \centering
    \includegraphics[width=0.75\linewidth]{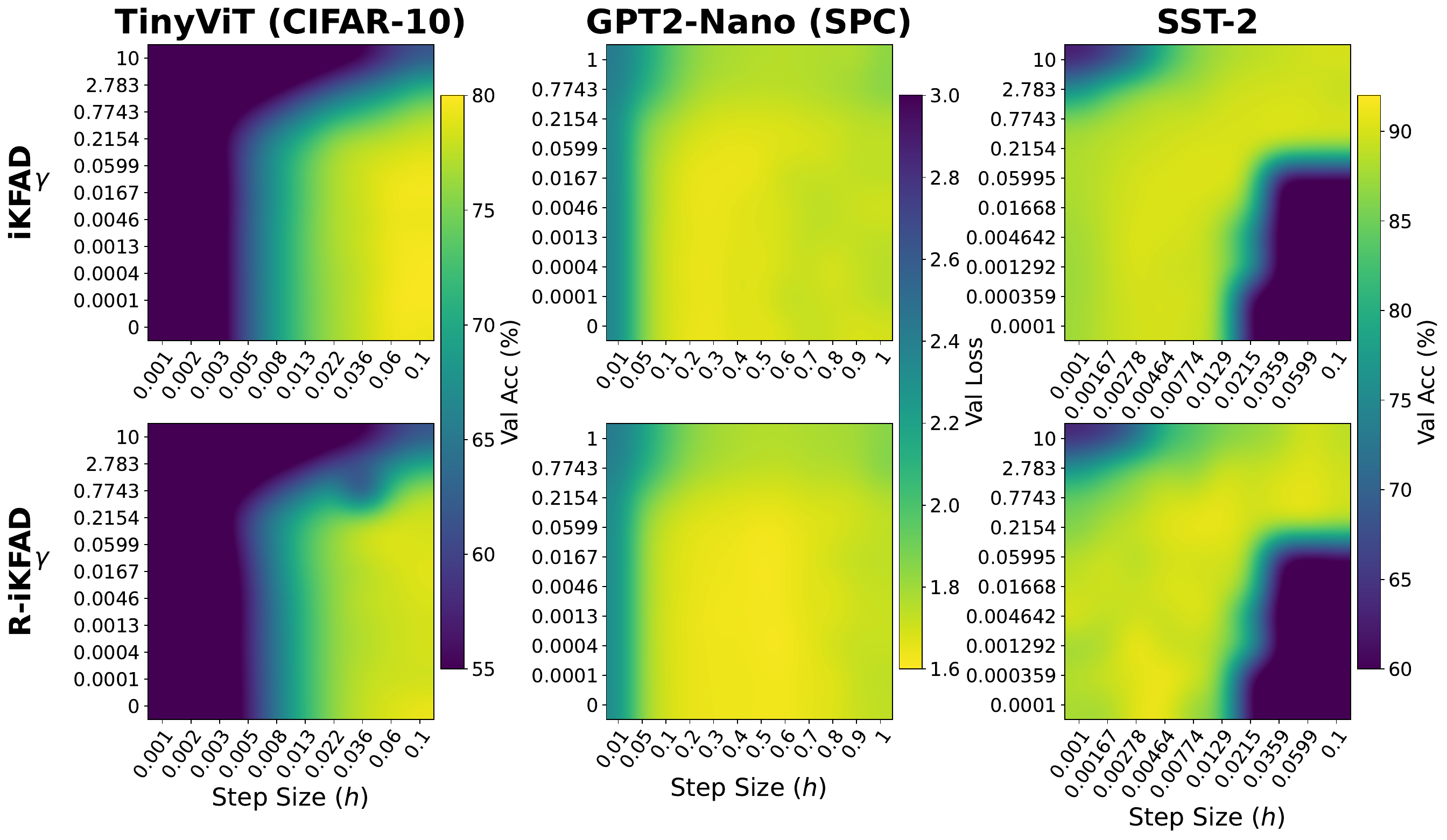}
    \caption{Two-dimensional $\gamma$--$h$ sensitivity for iKFAD (top) and R-iKFAD (bottom). Each cell shows best performance at a fixed $(h,\gamma)$ grid point. Brighter~=~better. R-iKFAD retains broad high-performance plateaus.}
    \label{fig:gammah}
\end{figure}

\FloatBarrier
\subsection{How Close is \texorpdfstring{$\tilde{\xi}$}{xi tilde} to the Best Rank-1 Friction?}
Factorization could give a poor approximation for two reasons: $\xi$ (iKFAD) may not be close to rank-1,
  or the construction of $\tilde{\xi}$ (R-iKFAD) may miss a good rank-1 approximation. By the Eckart-Young
  theorem, the smallest relative error of any rank-1 approximation is $\sqrt{1-\sigma_1^2/\sum_i\sigma_i^2}$.

  Figure~\ref{fig:eckart} compares this minimum with the error of $\tilde{\xi}$. We run iKFAD0
  and update R-iKFAD's factors $(R_\ell,C_\ell)$ alongside it, without using them in the
  optimizer updates. This lets us compare $\xi$ and $\tilde{\xi}$ at the same iterates.
  Across the four benchmarks, the error of $\tilde{\xi}$ is between $0.07$ and $0.85$
  percentage points above the minimum. On GPT2-S, the median gap across the $74$
  two-dimensional layers is $0.06$ percentage points, and $61$ layers are within one
  percentage point of the minimum. Relative to the minimum error, the gap is largest
  where $\xi$ is nearly rank-1 and the minimum is close to zero.

  The stable rank $\|\xi\|_F^2/\|\xi\|_2^2$ ranges from $1.18$ on TinyViT to $2.01$ on
  GPT2-Nano. Within GPT2-S, it varies by module: early attention value projections have
  stable ranks close to one, while query and key projections in the middle of the network
  have the largest stable ranks. These results suggest that most of the approximation
  error comes from the rank-1 restriction itself. The construction of $\tilde{\xi}$ adds
  little further error.
  
\begin{figure}[!h]
    \centering
    \includegraphics[width=\linewidth]{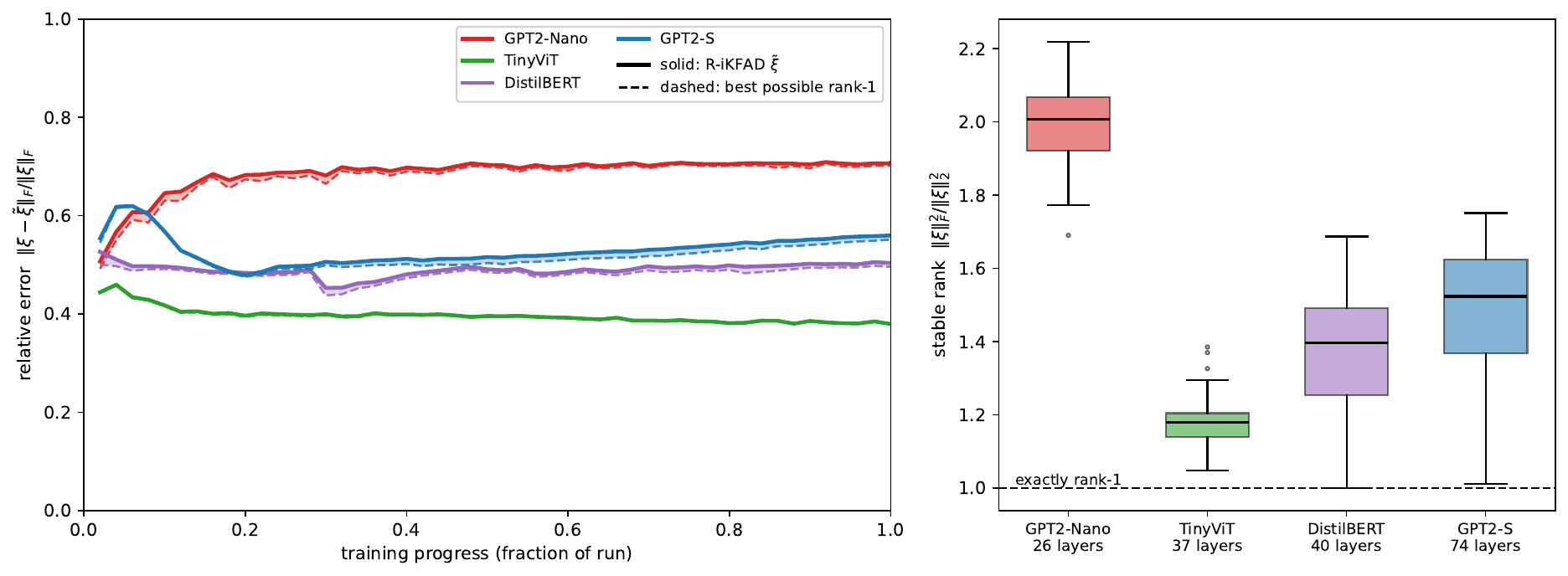}
    \caption{R-iKFAD's rank-1 friction against the best rank-1 approximation of the same tensor.
    \textbf{Left:} layer-averaged relative error $\|\xi - \tilde{\xi}\|_F/\|\xi\|_F$ (solid)
    against the Eckart--Young floor $\sqrt{1-\sigma_1^2/\sum_i\sigma_i^2}$ (dashed). The shaded region is the excess attributable to the construction, and is never wider than $0.85$ pp. The
    horizontal axis is the fraction of each run completed, as the four benchmarks used different
    step budgets. \textbf{Right:} distribution over layers of the stable rank of $\xi$ at the
    final step, where $1$ would be exactly rank-1. Measurement is taken along iKFAD0 trajectories carrying
    R-iKFAD's factors as shadow states.}
    \label{fig:eckart}
\end{figure}

\section{Conclusion}

We introduced \textbf{R-iKFAD}, which replaces iKFAD's full friction tensor with a rank-1 outer
product of momentum row and column sums, reducing per-layer friction state from
$\mathcal{O}(mn)$ to $\mathcal{O}(m+n)$ while retaining a full momentum buffer. Across
GPT2-Nano, TinyViT, DistilBERT and GPT2-S, R-iKFAD matches iKFAD in convergence speed, final
performance and hyperparameter robustness at roughly half the auxiliary optimizer memory. The
accompanying analysis, unlike the usual treatment of momentum-based dynamics, does not require
positive linear damping. Under strong convexity we prove exponential convergence when
$\gamma>0$ and convergence to the minimizer when $\gamma=0$. In the undamped case we demonstrate
matching upper and lower bounds on the energy, of order $t^{-1}$ when
$\epsilon_{\mathrm{stab}}=0$ and of order $t^{-1/2}$ when $\epsilon_{\mathrm{stab}}>0$.  Numerical integration of the dynamics agrees with both exponents. The analysis is continuous-time and deterministic, and assumes strong convexity. Extending it to the discrete-time, stochastic-gradient setting remains open.

\section*{Reproducibility Statement}

Python source code for all four benchmarks is available at
\url{https://github.com/rajit906/rank-1-ikfad}. It contains the iKFAD and R-iKFAD
implementations, training scripts, and hyperparameter sweep scripts, as well as
per-benchmark environment specifications including seed numbers. Visualization code
for Figures~\ref{fig:losses}, \ref{fig:ablation} and~\ref{fig:acc} is included in a
notebook. \texttt{theory\_numerics/} contains the MATLAB code behind the numerical
verification of the rates in Figure~\ref{fig:g0_decay}. Appendix~\ref{app:hyperparams}
documents the hyperparameter sweep and training configurations.

\section*{Acknowledgements}
Part of this research was supported by the ProbAI Hub. The authors acknowledge the use of resources provided by the Isambard-AI National AI Research Resource (AIRR) \citep{McAlWo2024}. Isambard-AI is operated by the University of Bristol and is funded by the UK Government's Department for Science, Innovation and Technology (DSIT) via UK Research and Innovation; and the Science and Technology Facilities Council [ST/AIRR/I-A-I/u6ih].

\bibliographystyle{plainnat}
\bibliography{References/references}
\appendix
\section{Update Rules for Splitting Discretisation}\label{app:splitting}

This appendix gives the explicit sub-step updates for the BACD splitting introduced in Section~\ref{sec:method}, and the treatment of the one sub-flow that does not admit a closed-form joint solution.

\begin{table}[h]
\centering
\caption{Analytical sub-step update rules for the BACD splitting sequence over time step $h$.}
\label{tab:splitting_updates}
\small
\begin{tabular}{c l}
\toprule
Sub-step & Update Equation \\
\midrule
\textbf{B} & $P^{(B)} = P_n - h\,\nabla f(X_n)$ \\[6pt]
\textbf{A} & $X_{n+1} = X_n + h\,P^{(B)}$ \\[6pt]
\textbf{C} & $\tilde{\xi}_n = \dfrac{R_n C_n^\top}{\mathbf{1}_m^\top R_n + \epsilon_{\mathrm{stab}}}$ \\[10pt]
           & $P^{(1)} = P^{(B)} \odot \exp\!\left(-\tfrac{h}{2}\,\tilde{\xi}_n\right)$ \\[10pt]
           & $R_{n+1} = \mathrm{e}^{-\alpha h} R_n + \dfrac{1-\mathrm{e}^{-\alpha h}}{\mu\alpha}[P^{(1)}]^2\mathbf{1}_n$ \\[10pt]
           & $C_{n+1} = \mathrm{e}^{-\alpha h} C_n + \dfrac{1-\mathrm{e}^{-\alpha h}}{\mu\alpha}([P^{(1)}]^2)^\top\mathbf{1}_m$ \\[10pt]
           & $\tilde{\xi}_{n+1} = \dfrac{R_{n+1} C_{n+1}^\top}{\mathbf{1}_m^\top R_{n+1} + \epsilon_{\mathrm{stab}}}$ \\[10pt]
           & $P^{(2)} = P^{(1)} \odot \exp\!\left(-\tfrac{h}{2}\,\tilde{\xi}_{n+1}\right)$ \\[6pt]
\textbf{D} & $P_{n+1} = P^{(2)} \odot \mathrm{e}^{-\gamma h}$ \quad (\textit{identity when $\gamma=0$}) \\
\bottomrule
\end{tabular}
\end{table}
\noindent Within sub-flow C, no closed-form joint solution exists for coupled $P$ and $(R, C)$. We apply a symmetric sequence: evaluating $\exp(-\tfrac{h}{2}\tilde{\xi})$ twice around an exact exponential ODE factor update improves integration accuracy within sub-flow C alone. Using the intermediate momentum $P^{(1)}$ for factor updates reduces forcing magnitude.

\section{Optimal Hyperparameters}
\label{app:hyperparams}

Across models and tasks, the best-performing scales for $\mu$ (iKFAD0, R-iKFAD0) varied by several orders of magnitude. This variation is likely related to the scale of the momentum variables during training, since $\mu$ controls the strength of the adaptive friction terms. Normalizing the momentum variables across iterations could potentially reduce this sensitivity, but we leave this for future work.

Hyperparameters were selected by minimum validation loss during the Optuna sweeps. All reported results are computed on held-out test data.

Experimental configurations:
\begin{itemize}
    \item \textbf{TinyViT -- CIFAR-10}: batch size 128, 25 epochs, 100 trials. $h \in [10^{-6}, 10^{-1}]$, $\alpha \in [10^{-3}, 1]$, $\mu \in [10^{-8}, 10]$ for iKFAD0 and R-iKFAD0.
    \item \textbf{DistilBERT -- SST-2}: batch size 16, 2 epochs, 80 trials. $h \in [10^{-6}, 10^{-1}]$, $\alpha \in [10^{-3}, 1]$, $\mu \in [10^{-8}, 10]$ for iKFAD0 and R-iKFAD0.
    \item \textbf{GPT2-Nano -- Shakespeare}: batch size 16, 5000 steps for the hyperparameter sweeps and final reported runs, 500 trials. $h \in [10^{-6}, 5 \times 10^{-1}]$, $\alpha \in [10^{-5}, 10]$, $\mu \in [10^{-8}, 10]$ for iKFAD0 and R-iKFAD0.
    \item \textbf{GPT2-S -- OpenWebText}: batch size 16, 75001 steps for the hyperparameter sweeps and 200001 steps for the final reported runs, 80 trials. $h \in [10^{-6}, 10^{-1}]$, $\alpha \in [10^{-3}, 1]$, $\mu \in [10^{-8}, 10]$ for iKFAD0 and $\mu \in [10^{-9}, 10^{-2}]$ for R-iKFAD0.
\end{itemize}

\begin{table}[H]
\centering
\footnotesize
\setlength{\tabcolsep}{4pt}
\renewcommand{\arraystretch}{1.15}
\caption{Optimized hyperparameters for all experiments. All iKFAD0 and R-iKFAD0 entries have $\gamma=0$. Dashes indicate parameters not used by the method.}
\label{tab:opt_params}
\begin{tabular}{@{}lccccc@{}}
\toprule
 & $h$ & $\alpha$ & $\mu$ & $\beta_1$ & $\beta_2$ \\
\midrule

\multicolumn{6}{c}{\textbf{TinyViT (CIFAR-10)}} \\
\midrule
iKFAD0   & 0.0725 & 0.0899 & $1.2643 \times 10^{-5}$ & -- & -- \\
R-iKFAD0 & 0.0860 & 0.5327 & $9.4765 \times 10^{-7}$ & -- & -- \\
Adam     & $5.5311 \times 10^{-4}$ & -- & -- & 0.8608 & 0.8852 \\
Adafactor-m & $5.6050 \times 10^{-4}$ & -- & -- & 0.8907 & -- \\
\midrule

\multicolumn{6}{c}{\textbf{DistilBERT (SST-2)}} \\
\midrule
iKFAD0   & 0.0165 & 0.0574 & $9.8948 \times 10^{-6}$ & -- & -- \\
R-iKFAD0 & 0.0423 & 0.0060 & $8.1874 \times 10^{-7}$ & -- & -- \\
Adam     & $3.82 \times 10^{-5}$ & -- & -- & 0.9187 & 0.9825 \\
Adafactor-m & $2.5638 \times 10^{-5}$ & -- & -- & 0.9622 & -- \\
\midrule

\multicolumn{6}{c}{\textbf{GPT2-Nano (Shakespeare)}} \\
\midrule
iKFAD0   & 0.4941 & 2.1955 & $4.5521 \times 10^{-6}$ & -- & -- \\
R-iKFAD0 & 0.4842 & 2.8475 & $1.9407 \times 10^{-6}$ & -- & -- \\
Adam     & $1.6803 \times 10^{-3}$ & -- & -- & 0.8876 & 0.9265 \\
Adafactor-m & $2.9923 \times 10^{-3}$ & -- & -- & 0.8893 & -- \\
\midrule

\multicolumn{6}{c}{\textbf{GPT2-S (OpenWebText)}} \\
\midrule
iKFAD0   & 0.0996 & 0.0476 & $1.0429 \times 10^{-5}$ & -- & -- \\
R-iKFAD0 & 0.0918 & 0.0579 & $1.3637 \times 10^{-6}$ & -- & -- \\
Adam     & $6.4723 \times 10^{-4}$ & -- & -- & 0.8945 & 0.9945 \\
Adafactor-m & $7.2777 \times 10^{-4}$ & -- & -- & 0.8865 & -- \\
\midrule

\bottomrule
\end{tabular}
\end{table}

\noindent For all experiments, Adam was swept over $h \in [10^{-6}, 10^{-2}]$, $\beta_1 \in [0.85, 0.999]$, $\beta_2 \in [0.85, 0.999]$. Adafactor-m was swept over $h \in [10^{-5}, 10^{-2}]$ and $\beta_1 \in [0.85, 0.999]$, with no relative step or warmup. Adafactor-m has no tunable $\beta_2$: as originally proposed~\citep{ShSt2018} and in standard practice, the second-moment decay follows the prescribed schedule $\beta_{2,t} = 1 - t^{-0.8}$ (\texttt{decay\_rate}$\,=-0.8$), which is why its $\beta_2$ entries are left blank. We enable $\beta_1 > 0$ rather than the conventional momentum-free setting so that Adafactor-m occupies the same auxiliary-memory regime as R-iKFAD (Table~\ref{tab:memory}). All sweeps used Optuna's~\citep{AkSaYa+2019} TPE sampler. Table~\ref{tab:opt_params} reports the best hyperparameters found. iKFAD0 and R-iKFAD0 are the $\gamma=0$ variants. All such entries satisfy $\gamma=0$ by definition. For iKFAD0 and R-iKFAD0, $h$ denotes the step size and $\mu$ the adaptive damping coefficient. For Adam and Adafactor-m, $h$ is the learning rate.

\section{Memory Analysis}
\label{app:memory}
\begin{table}[H]
\centering
\caption{Optimizer auxiliary state (MiB, float32), computed over all model parameters. R-iKFAD achieves approximately $2\times$ savings over both iKFAD and Adam because the rank-1 friction factors $(R_\ell, C_\ell)$ are negligible relative to the full momentum buffer. The Adafactor-m baseline used throughout enables its optional first moment ($\beta_1>0$, see Section~\ref{sec:experiments}) and therefore carries an auxiliary footprint identical to that of R-iKFAD, making the two directly comparable at matched memory. Momentum-free Adafactor is included for reference.}
\label{tab:memory}
\small
\begin{tabular}{lrrrrrr}
\toprule
Model & $N$ & Adam & Adafactor-m & Adafactor & iKFAD & R-iKFAD \\
\midrule
TinyViT      &  4.0M  &  30.7  &  15.6  &   0.2  &  30.7  &  15.6 ($2.0\times$) \\
DistilBERT   & 67.0M  & 510.8  & 256.1  &   0.7  & 510.8  & 256.1 ($2.0\times$) \\
GPT2-Nano    &  0.8M  &   6.1  &   3.1  & $<0.1$  &   6.1  &   3.1 ($2.0\times$) \\
GPT2-S       & 124.4M & 948.9  & 475.4  &   0.9  & 948.9  & 475.4 ($2.0\times$) \\
\bottomrule
\end{tabular}
\end{table}

\section{Accuracy Curves}
\label{app:accuracy}

Figure~\ref{fig:acc} gives the accuracy curves corresponding to the loss curves of
Figure~\ref{fig:losses}, for the two classification benchmarks. They are reported here rather
than in Section~\ref{sec:experiments} because they follow the loss curves closely.

\begin{figure}[H]
    \centering
    \includegraphics[width=0.55\linewidth]{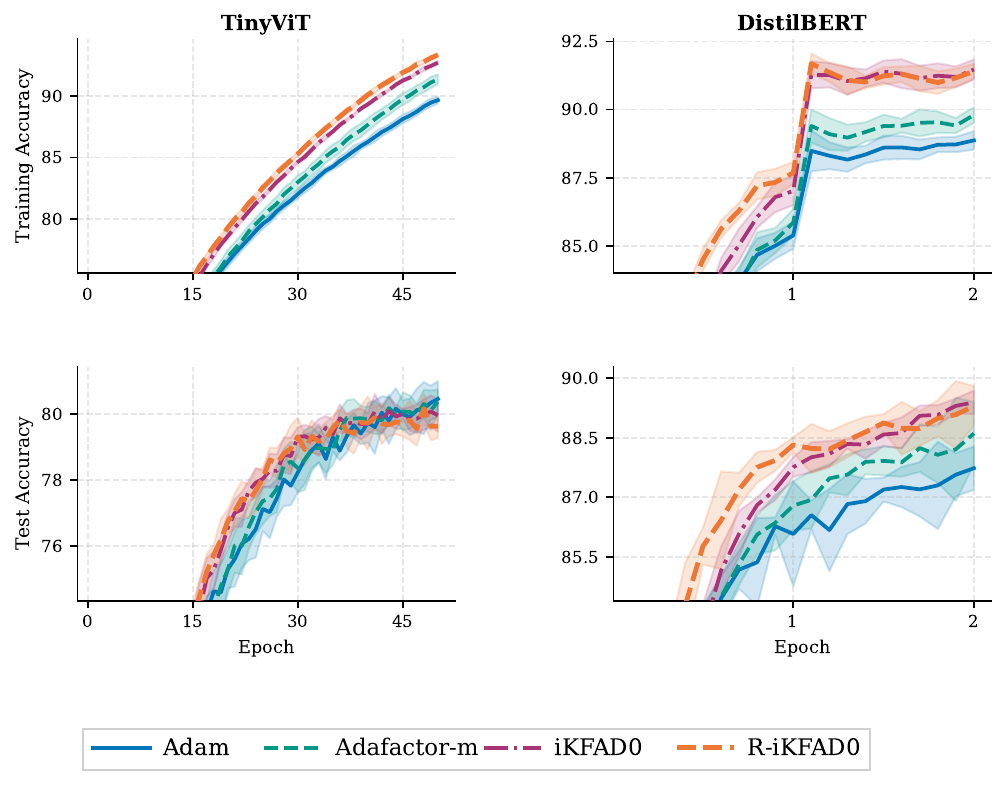}
    \caption{Accuracy averaged over 10~seeds. R-iKFAD0 matches iKFAD0 and Adam. Error bars denote one std.}
    \label{fig:acc}
\end{figure}

\section{Prefactor Dependence}
\label{app:g0_prefactors}

The prefactors depend on $\alpha$, $\mu$ and $\epsilon_{\mathrm{stab}}$ as the calculation of
Section~\ref{app:g0_formal_rate} predicts. If the asymptotics are $\mathcal{E}\sim c_1\alpha\mu/t$ and
$\mathcal{E}\sim c_2\alpha\mu\sqrt{\epsilon_{\mathrm{stab}}/2t}$, then the ratios in
Table~\ref{tab:g0_prefactors} are approximately constant. They were evaluated at $t=10^4$ with $m=3$, $n=2$. The
crossover row indicates the last downward crossing of $\mathbf1^\top R=\epsilon_{\mathrm{stab}}$.

\begin{table}[H]
\centering
\caption{Parameter dependence of the prefactors, evaluated at $t=10^4$ with $m=3$, $n=2$. In the first row
$\alpha$ and $\mu$ each range over a factor of $4$, so $\alpha\mu$ ranges over a factor of $16$, yet
the ratio moves by less than $30\%$.}
\label{tab:g0_prefactors}
\small
\begin{tabular}{lccc}
\toprule
quantity & parameters varied & mean & spread (max/min) \\
\midrule
$t\mathcal{E}/(\alpha\mu)$, \ $\epsilon_{\mathrm{stab}}=0$
  & $\alpha,\mu\in\{\tfrac12,1,2\}$ & $4.69$ & $1.29$ \\
$\mathcal{E}\sqrt t\big/\bigl(\alpha\mu\sqrt{\epsilon_{\mathrm{stab}}/2}\bigr)$, \ $\epsilon_{\mathrm{stab}}>0$
  & $\alpha,\mu\in\{\tfrac12,1\}$, $\epsilon_{\mathrm{stab}}\in\{0.05,0.2\}$ & $2.31$ & $1.14$ \\
$\mathcal{E}\big/(\alpha\mu\,\epsilon_{\mathrm{stab}})$ at the crossing
  & $\alpha,\mu\in\{\tfrac12,1\}$, $\epsilon_{\mathrm{stab}}\in\{0.05,0.2\}$ & $0.87$ & $1.21$ \\
\bottomrule
\end{tabular}
\end{table}

The third row places the transition between the two regimes at
$\mathcal{E}\approx0.87\,\alpha\mu\,\epsilon_{\mathrm{stab}}$, confirming the crossover scale. For the
scalar system $m=n=1$ with $\alpha=\mu=1$, evaluating at $t=10^5$ gives $t\mathcal{E}/(\alpha\mu)=0.909$
and $\mathcal{E}\sqrt t/(\alpha\mu\sqrt{\epsilon_{\mathrm{stab}}/2})=0.884$, both within about $10\%$ of
the value $1$ the averaging calculation gives. For $m,n>1$ the same relations hold with a
dimension-dependent coefficient.

\section{Numerical Setup for Section~\ref{app:g0_numerics}}
\label{app:g0_setup}

The initial entries of $X$ and $P$ were independent normal variables with standard deviations
$0.4$ and $0.3$, and $R(0)=C(0)=0$ throughout.
The parameter sweeps used MATLAB R2026a, \texttt{rng(0)}, and \texttt{ode89} with relative
tolerance $10^{-10}$ and absolute tolerance $10^{-12}$. When
$\epsilon_{\mathrm{stab}}=0$ and $\mathbf1^\top R=0$ the code set $\widetilde\xi=0$, and used
\eqref{eq:xi_def} once $\mathbf1^\top R>0$.
The curves in Figure~\ref{fig:g0_decay} were computed by BACD splitting with $m=4$, $n=3$,
$\alpha=\mu=1$, $h=0.02$, final time $10^{7}$ and random seed $1$. The additional runs behind
Table~\ref{tab:g0_slopes} used dimensions $2\times2$ and $6\times5$ with the same $A$, $X(0)$
and $P(0)$. 

As consistency checks, for $m=3$, $n=2$ and $\alpha=\mu=1$, the largest absolute difference
between the \texttt{ode89} and BACD slopes was $0.018$ over the three fitted intervals.
Repeating the six BACD fits at $h=0.01$ instead of $h=0.02$ changed each slope by less than
$8.9\times10^{-4}$.

\section{Rank-1 iKFAD Convergence Analysis}\label{app:proofs}

We analyze the continuous-time Rank-1 iKFAD dynamics for a matrix parameter $X \in \mathbb{R}^{m \times n}$ (the vector case follows by flattening). The system is given by
\begin{subequations}
\begin{align}
\dot{X} &= P, \label{eq:rk1_x}\\
\dot{P} &= -\nabla f(X) - \tilde{\xi} \odot P - \gamma P,\\
\dot{R} &= \frac{1}{\mu} [P]^2 \mathbf{1}_n - \alpha R, \label{eq:rk1_R}\\
\dot{C} &= \frac{1}{\mu} ([P]^2)^\top \mathbf{1}_m - \alpha C, \label{eq:rk1_C}
\end{align}
\end{subequations}
where $P \in \mathbb{R}^{m \times n}$, $R \in \mathbb{R}^m$, $C \in \mathbb{R}^n$, and $\gamma \ge 0$, $\alpha, \mu > 0$ are hyperparameters. Boundedness (Lemma~\ref{lem:rk1_bounded}) holds for $\gamma \ge 0$. The exponential convergence results below additionally require $\gamma > 0$. The rank-1 friction tensor $\tilde{\xi} \in \mathbb{R}^{m \times n}$ is defined element-wise as
\[
\tilde{\xi}_{ij} = \frac{R_i C_j}{\sum_{k=1}^m R_k + \epsilon_{\mathrm{stab}}},
\]
with $\epsilon_{\mathrm{stab}} > 0$ a small constant preventing division by zero. The notation $\odot$ denotes element-wise multiplication, $[\,\cdot\,]^2$ element-wise squaring, and $\langle \cdot, \cdot \rangle_F$ the Frobenius inner product.

\subsection{Dynamical properties and boundedness}

Throughout this appendix $X^*$ denotes a global minimizer of $f$. Under the hypotheses used below such a point exists, because $f$ is continuous and coercive, and it satisfies
\begin{equation}
\nabla f(X^*) = 0 \qquad\text{and}\qquad f(X) \ge f(X^*) \quad \text{for all } X. \label{eq:xstar}
\end{equation}

\begin{lemma}[Boundedness of solutions]\label{lem:rk1_bounded}
Assume $f \in C^2$ (so $\nabla f$ is automatically locally Lipschitz), $f(X) \to +\infty$ as $\|X\|_F \to +\infty$, and let $X^*$ be a global minimizer of $f$ as in \eqref{eq:xstar}. Then for any initial condition $(X_0, P_0, R_0, C_0)$ with $R_0, C_0 \ge 0$ entrywise, the solution of \eqref{eq:rk1_x}--\eqref{eq:rk1_C} is well-defined for all $t \ge 0$, and there exists a compact set $\mathcal{K}$ containing the trajectory. In particular, with
\[
P_{\max}^2 := 2\bigl(f(X_0) - f(X^*) + \tfrac12\|P_0\|_F^2\bigr), \quad
R_{\max} := \|R_0\| + \frac{P_{\max}^2}{\alpha\mu}, \quad
C_{\max} := \|C_0\| + \frac{P_{\max}^2}{\alpha\mu},
\]
all of which are finite and independent of $t$, we have for all $t \ge 0$
\[
0 \le R_i(t) \le R_{\max},\quad 0 \le C_j(t) \le C_{\max},\quad \|P(t)\|_F \le P_{\max},
\]
and moreover the friction tensor obeys the bound
\begin{equation}
0 \le \tilde{\xi}_{ij}(t) \le \xi_{\max} := \min\Bigl\{ C_{\max},\ \frac{R_{\max}C_{\max}}{\epsilon_{\mathrm{stab}}} \Bigr\}. \label{eq:xi_cone_cts}
\end{equation}
Note that $P_{\max}$ depends on neither $\mu$ nor $\gamma$.
\end{lemma}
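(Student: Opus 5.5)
The proof is a Lyapunov argument on the energy $\mathcal{E}=f(X)-f(X^*)+\tfrac12\|P\|_F^2$, combined with the linear-filter structure of the factor equations. The only subtlety is the ordering: nonnegativity of $R,C$ is needed both for $\dot{\mathcal E}\le0$ and for the vector field to stay well defined (the denominator $\sum_k R_k+\epsilon_{\mathrm{stab}}$ must stay positive). So I will first work on a maximal interval of existence and only then extend globally.

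\emph{Step 1 (local existence and positivity).} On the open set $\{\sum_k R_k+\epsilon_{\mathrm{stab}}>0\}$ the right-hand side of \eqref{eq:rk1_x}--\eqref{eq:rk1_C} is locally Lipschitz, because $\nabla f$ is ($f\in C^2$) and $\tilde\xi$ is a rational function with nonvanishing denominator. Since $R_0\ge0$, the initial point lies in this set, and Picard--Lindel\"of gives a unique solution on a maximal interval $[0,T_{\max})$. On this interval, variation of constants gives
\[
R_i(t)=e^{-\alpha t}R_i(0)+\frac1\mu\int_0^t e^{-\alpha(t-s)}\sum_j P_{ij}(s)^2\,\diff s\ \ge 0,
\]
and the same formula holds for $C_j$. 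Hence $R,C\ge0$ and $\sum_kR_k+\epsilon_{\mathrm{stab}}\ge\epsilon_{\mathrm{stab}}>0$, so the trajectory never approaches the boundary of the domain. It follows that $\tilde\xi_{ij}\ge0$.

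\emph{Step 2 (energy decay).} Differentiating along solutions and using $\dot X=P$ gives
\[
\dot{\mathcal E}=-\langle P,\tilde\xi\odot P\rangle_F-\gamma\|P\|_F^2=-\sum_{ij}\tilde\xi_{ij}P_{ij}^2-\gamma\|P\|_F^2\le0 .
\]
So $\mathcal E(t)\le\mathcal E(0)$, which yields $\|P(t)\|_F^2\le 2\mathcal E(0)=P_{\max}^2$ by \eqref{eq:xstar}. This bound uses neither $\mu$ nor $\gamma$. It also gives $f(X(t))\le f(X_0)+\tfrac12\|P_0\|_F^2$, so by coercivity $X(t)$ stays in a compact sublevel set of $f$.

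\emph{Step 3 (factor and friction bounds; continuation).} Inserting $\sum_jP_{ij}^2\le\|P\|_F^2\le P_{\max}^2$ into the Duhamel formula gives
\[
R_i(t)\le R_i(0)+\frac{P_{\max}^2}{\alpha\mu}\int_0^\infty e^{-\alpha s}\,\diff s\cdot\alpha\le\|R_0\|+\frac{P_{\max}^2}{\alpha\mu},
\]
since $\int_0^t e^{-\alpha(t-s)}\,\diff s\le 1/\alpha$. The bound for $C_j$ follows in the same way. For the friction, the inequality $R_i\le\sum_kR_k<\sum_kR_k+\epsilon_{\mathrm{stab}}$ gives $\tilde\xi_{ij}\le C_j\le C_{\max}$. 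Separately, bounding the denominator below by $\epsilon_{\mathrm{stab}}$ gives $\tilde\xi_{ij}\le R_{\max}C_{\max}/\epsilon_{\mathrm{stab}}$, and together these prove \eqref{eq:xi_cone_cts}.

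All state variables therefore remain in a fixed compact set $\mathcal K$ that lies inside the domain, at positive distance $\epsilon_{\mathrm{stab}}$ from the singular set. The standard blow-up alternative then forces $T_{\max}=\infty$.

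The main obstacle is the circular dependence between positivity, energy decay and global existence. The maximal-interval argument in Step 1 resolves it, since positivity there follows from the integral representation alone and does not require any a priori bound on $P$.
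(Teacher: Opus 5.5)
Your proof is correct and follows the paper's argument: nonnegativity of $R,C$ from the linear filter, monotonicity of the reduced energy $\mathcal{E}$ giving $\|P\|_F\le P_{\max}$ and confinement of $X$ to a sublevel set, the variation-of-constants bound on the factors, the two friction bounds (via $R_i\le\mathbf{1}_m^\top R$ and via a denominator at least $\epsilon_{\mathrm{stab}}$), and continuation. Your explicit maximal-interval bookkeeping and your componentwise (rather than norm) bounds on $R,C$ are only a more careful rendering of the same steps. The one blemish is the garbled intermediate expression $\frac{P_{\max}^2}{\alpha\mu}\int_0^\infty e^{-\alpha s}\,\diff s\cdot\alpha$, which should simply read $\frac{1}{\mu}P_{\max}^2\int_0^t e^{-\alpha(t-s)}\,\diff s\le P_{\max}^2/(\alpha\mu)$.
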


\begin{proof}
The ODEs \eqref{eq:rk1_R}--\eqref{eq:rk1_C} have non-negative forcing and linear decay, so $R_0, C_0 \ge 0$ implies $R_i(t), C_j(t) \ge 0$ for all $t \ge 0$, and hence $\tilde{\xi}_{ij} \ge 0$ throughout.

Consider the reduced energy
\[
\mathcal{E}(X,P) = f(X) - f(X^*) + \tfrac12\|P\|_F^2,
\]
which omits the $R,C$ contributions. Differentiating along trajectories and cancelling the gradient terms,
\begin{equation}
\dot{\mathcal{E}} = \langle \nabla f(X), P\rangle_F + \langle P, -\nabla f(X) - \tilde{\xi}\odot P - \gamma P\rangle_F
= -\gamma\|P\|_F^2 - \sum_{i,j}\tilde{\xi}_{ij}P_{ij}^2 \le 0, \label{eq:E_decay}
\end{equation}
both terms being non-positive because $\gamma \ge 0$ and $\tilde{\xi} \ge 0$ entrywise. No bound on $\|P\|_F$ is used in deriving \eqref{eq:E_decay}, so the estimate is not circular. Consequently $\mathcal{E}(t) \le \mathcal{E}(0)$ for all $t$, which yields at once
\[
\|P(t)\|_F^2 \le 2\mathcal{E}(0) =: P_{\max}^2,
\]
and, since $f(X) - f(X^*) \le \mathcal{E}(0)$ with $f$ coercive, confinement of $X(t)$ to a compact sublevel set of $f$.

With $\|P\|_F \le P_{\max}$ now established independently, the factors follow from the explicit linear filter. Integrating \eqref{eq:rk1_R},
\[
R(t) = \mathrm{e}^{-\alpha t}R_0 + \frac{1}{\mu}\int_0^t \mathrm{e}^{-\alpha(t-s)}[P(s)]^2\mathbf{1}_n\,\diff s,
\]
and using the sharp, dimension-free estimate $\bigl\|[P]^2\mathbf{1}_n\bigr\| \le \|P\|_F^2 \le P_{\max}^2$ (the entries of $[P]^2\mathbf{1}_n$ are non-negative and sum to $\|P\|_F^2$, so its $2$-norm is at most its $1$-norm),
\[
\|R(t)\| \le \|R_0\| + \frac{P_{\max}^2}{\alpha\mu} =: R_{\max},
\]
with the analogous bound $C_{\max} := \|C_0\| + P_{\max}^2/(\alpha\mu)$. All four variables therefore remain in a compact set $\mathcal{K}$ for all $t \ge 0$, and global existence follows from the standard continuation criterion for locally Lipschitz vector fields.

Finally we establish \eqref{eq:xi_cone_cts}. Two bounds are possible. Bounding the denominator below by $\epsilon_{\mathrm{stab}}$ gives $\tilde{\xi}_{ij} \le R_{\max}C_{\max}/\epsilon_{\mathrm{stab}}$, which diverges as $\epsilon_{\mathrm{stab}} \downarrow 0$. The second is sharper and uniform in $\epsilon_{\mathrm{stab}}$: since every $R_k \ge 0$ we have $R_i \le \mathbf{1}_m^\top R$, hence
\[
\tilde{\xi}_{ij} = \frac{R_i}{\mathbf{1}_m^\top R + \epsilon_{\mathrm{stab}}}\, C_j \le C_j \le C_{\max},
\]
and \eqref{eq:xi_cone_cts} is the minimum of the two. The constant $\epsilon_{\mathrm{stab}}>0$ keeps the friction well defined at $R=0$, which is how the implementation initializes the factors. The case $\epsilon_{\mathrm{stab}}=0$ is treated separately in Lemma~\ref{lem:balanced}. The bound \eqref{eq:xi_cone_cts} is independent of $\epsilon_{\mathrm{stab}}$, so the stabilizer does not affect the exponential rate of Theorem~\ref{thm:rk1_exp}. At $\gamma=0$, however, it does determine the algebraic exponent (Theorem~\ref{thm:rates}).
\end{proof}

\subsection{Geometric convergence}

\noindent\textbf{Theorem~\ref{thm:rk1_exp}.} Suppose that $f$ is $m_f$-strongly convex and has an
$L_f$-Lipschitz gradient. Let
$\gamma,\alpha,\mu,\epsilon_{\mathrm{stab}}>0$, and let $R(0)$ and
$C(0)$ be componentwise nonnegative. Then there are constants
$M,\rho>0$ such that
\begin{equation}
 f(X(t))-f(X^*)+\|X(t)-X^*\|_F^2+\|P(t)\|_F^2
 +\|R(t)\|_2^2+\|C(t)\|_2^2
 \leq M e^{-\rho t}
\end{equation}
for every $t\geq0$.

\begin{proof}[Proof of Theorem~\ref{thm:rk1_exp}]
Put $Y=X-X^*$ and $d_{\max}=\gamma+\xi_{\max}$. Choose
\begin{equation}
0<\delta\leq
\min\left\{\frac{\sqrt{m_f}}{2},
\frac{\gamma}{2\left(1+d_{\max}^2/(2m_f)\right)}
\right\}.
\label{eq:positive_gamma_delta}
\end{equation}
Here $\xi_{\max}$ is the bound in Lemma~\ref{lem:rk1_bounded}.
Define
\[
V=f(X)-f(X^*)+\frac12\|P\|_F^2
  +\delta\langle Y,P\rangle_F .
\]
Young's inequality gives
\begin{align}
V
&\geq
\frac{m_f}{4}\|Y\|_F^2+\frac14\|P\|_F^2.
\label{eq:positive_gamma_V_lower}
\end{align}
It also gives
\begin{equation}
V\leq \left(\frac{L_f}{2}+\frac{m_f}{4}\right)\|Y\|_F^2
       +\frac34\|P\|_F^2.
\label{eq:positive_gamma_V_upper}
\end{equation}

Differentiating along the equations gives
\begin{align}
\dot V
&\leq -(\gamma-\delta)\|P\|_F^2
-\delta m_f\|Y\|_F^2+\delta d_{\max}\|Y\|_F\|P\|_F
\notag\\
&\leq
-\frac{\delta m_f}{2}\|Y\|_F^2-\frac{\gamma}{2}\|P\|_F^2.
\label{eq:positive_gamma_V_decay}
\end{align}
The last line follows from Young's inequality and
\eqref{eq:positive_gamma_delta}.

Set
\[
\rho_0:=\min\left\{
\frac{2\delta m_f}{2L_f+m_f},\frac{2\gamma}{3}
\right\}.
\]
Equations~\eqref{eq:positive_gamma_V_upper} and
\eqref{eq:positive_gamma_V_decay} imply
\[
\dot V\leq-\rho_0V.
\]
Therefore $V(t)\leq e^{-\rho_0t}V(0)$.

The lower bound~\eqref{eq:positive_gamma_V_lower} now gives
exponential decay of $X-X^*$ and $P$. The variation-of-constants
formulas for $R$ and $C$ then give their exponential decay. For
example,
\[
R_i(t)=e^{-\alpha t}R_i(0)
+\frac1\mu\int_0^t e^{-\alpha(t-s)}
       \sum_{k=1}^nP_{ik}(s)^2\,\diff s .
\]
The convolution on the right decays exponentially. The same argument
applies to $C$. Taking, for example,
\[
0<\rho<\min\{\rho_0,2\alpha\}
\]
and increasing $M$ if necessary proves
\eqref{eq:positive_gamma_result}. The same bounds prevent finite-time
escape, so the solution is defined for every $t\geq0$.
\end{proof}

The proof uses the fixed term $-\gamma\|P\|_F^2$ to control the
position--momentum cross term. When $\gamma=0$, this control is absent.
The next subsection therefore treats this case separately.

\subsection{The case \texorpdfstring{$\gamma=0$}{gamma=0}}
In our experiments we observed that $\gamma=0$ gave optimal or near-optimal results.
The next theorem proves convergence of the continuous system in this case.

\noindent\textbf{Theorem~\ref{thm:g0_conv}.} Suppose that $f$ is strongly convex and has a locally Lipschitz gradient. Let $\alpha,\mu,\epsilon_{\mathrm{stab}}>0$, and suppose that $R(0)$ and $C(0)$ are componentwise nonnegative. Then every solution of \eqref{eq:rk1_x}--\eqref{eq:rk1_C} with $\gamma=0$ converges to
\[
(X,P,R,C)=(X_\star,0,0,0),
\]
where $X_\star$ is the unique minimizer of $f$.

\begin{proof}[Proof of Theorem~\ref{thm:g0_conv}]
Set
\[
\mathcal H(X,P)=f(X)+\frac12\|P\|_F^2.
\]
Along a solution with $\gamma=0$,
\begin{align}
\dot{\mathcal H}
&=\langle \nabla f(X),P\rangle_F
 +\langle P,-\nabla f(X)-\tilde\xi\mathbin\odot P\rangle_F \notag\\
&=-\sum_{i,j}\tilde\xi_{ij}P_{ij}^2\leq0. \label{eq:g0_energy}
\end{align}
Thus $\mathcal H$ is nonincreasing. Its sublevel sets are bounded by
strong convexity. Lemma~\ref{lem:rk1_bounded} gives global existence.

The factors satisfy
\begin{align}
R_i(t)
&=e^{-\alpha t}R_i(0)
 +\frac1\mu\int_0^t e^{-\alpha(t-s)}
       \sum_{k=1}^nP_{ik}(s)^2\,\diff s, \label{eq:g0_R}\\
C_j(t)
&=e^{-\alpha t}C_j(0)
 +\frac1\mu\int_0^t e^{-\alpha(t-s)}
       \sum_{\ell=1}^mP_{\ell j}(s)^2\,\diff s. \label{eq:g0_C}
\end{align}
These formulas show that the factors remain nonnegative. Since $P$
is bounded, they also show that $R$ and $C$ are bounded. The complete
trajectory is therefore precompact.

It remains to identify the largest invariant subset of
$\{\dot{\mathcal H}=0\}$. Equation~\eqref{eq:g0_energy} and
$\tilde\xi_{ij}\geq0$ imply that
\[
\tilde\xi_{ij}P_{ij}^2=0
\quad\hbox{for every }i,j
\]
on this set. Suppose that $P_{ij}\neq0$ at some point of a trajectory
in the set. Continuity makes $P_{ij}^2>0$ on a time interval. Because
$\epsilon_{\mathrm{stab}}>0$, the displayed equality then implies
$R_iC_j=0$ throughout that interval.

This is impossible. If $R_i=0$ at any time in the interval, then
\[
\dot R_i=\frac1\mu\sum_kP_{ik}^2>0.
\]
Similarly, if $C_j=0$, then
\[
\dot C_j=\frac1\mu\sum_\ell P_{\ell j}^2>0.
\]
Moreover, the variation-of-constants formulas show that a nonnegative
factor, once positive, stays positive. Hence both $R_i$ and $C_j$
become positive in the interval, contradicting $R_iC_j=0$. We conclude
that every trajectory in $\{\dot{\mathcal H}=0\}$ has $P=0$.

If $P=0$ along an invariant trajectory, then $\nabla f(X)=0$ and $X=X^*$.
The largest invariant subset of $\{\dot{\mathcal H}=0\}$ is therefore
\[
\{(X^*,0,R,C):R,C\geq0\}.
\]
LaSalle's invariance principle gives
\[
X(t)\longrightarrow X^*,\qquad
P(t)\longrightarrow0.
\]

Finally, equations~\eqref{eq:g0_R}--\eqref{eq:g0_C} and
$P(t)\to0$ give $R(t),C(t)\to0$.
\end{proof}

\subsubsection{Algebraic rates}\label{app:g0_rates}

The formal calculation above identifies the exponents. The next result establishes them rigorously, with matching upper and lower bounds in both regimes.

Throughout this subsection $\gamma=0$ in \eqref{eq:rk1_x}--\eqref{eq:rk1_C}, and the friction is
\begin{equation}
 \tilde{\xi}_{ij}=\frac{R_iC_j}{s+\epsilon_{\mathrm{stab}}},
 \qquad s=\sum_iR_i.                                         \label{eq:friction}
\end{equation}
When $\epsilon_{\mathrm{stab}}=0$, set $\tilde{\xi}=0$ at $s=0$. Since
$\dot R_i\ge-\alpha R_i$ and $\dot C_j\ge-\alpha C_j$, nonnegative initial
factors remain nonnegative, so $s\ge0$ along the flow. We write
$\lVert\tilde{\xi}\rVert_\infty=\max_{i,j}|\tilde{\xi}_{ij}|$.

Let $X^*$ be the minimizer of $f$, and define
\begin{equation}
 \mathcal{E}(t)=f(X(t))-f(X^*)+\frac12\lVert P(t)\rVert_F^2.           \label{eq:energy}
\end{equation}

\noindent\textbf{Theorem~\ref{thm:rates}.}
Suppose that $f$ is strongly convex, its gradient is locally Lipschitz,
and it is $C^2$ in a neighbourhood of $X^*$. Let
$\gamma=0$ and $\alpha,\mu>0$, and let the initial factors be componentwise nonnegative.

If $\epsilon_{\mathrm{stab}}>0$ and $\mathcal{E}(0)>0$, there are $a,A>0$
such that
\[
 \frac a{\sqrt{1+t}}\le \mathcal{E}(t)\le\frac A{\sqrt{1+t}},
 \qquad t\ge0.
\]
If $\epsilon_{\mathrm{stab}}=0$, assume in addition that
$R(0)=C(0)=0$, and use the convention following \eqref{eq:friction}.
If $\mathcal{E}(0)>0$, there are $b,B>0$ such that
\[
 \frac b{1+t}\le \mathcal{E}(t)\le\frac B{1+t},
 \qquad t\ge0.
\]

The proof uses two estimates. A lower bound on the energy lost over a fixed
time window gives the upper bound on $\mathcal{E}$. An upper bound on the
instantaneous loss gives the lower bound on $\mathcal{E}$.

\begin{lemma}[The zero-stabilizer phase space]\label{lem:balanced}
Suppose that $\epsilon_{\mathrm{stab}}=0$ and
$\sum_iR_i=\sum_jC_j=:s$. The convention
$\tilde{\xi}=0$ at $s=0$ defines a locally Lipschitz friction tensor on
\[
 \mathcal B=\Bigl\{R\ge0,\ C\ge0,\ \sum_iR_i=\sum_jC_j\Bigr\}.
\]
Moreover,
\begin{equation}
 0\le\tilde{\xi}_{ij}\le C_j\le s.                         \label{eq:xi_s_bound}
\end{equation}
\end{lemma}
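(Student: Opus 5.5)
First I will prove the bound \eqref{eq:xi_s_bound}, and then use it to obtain local Lipschitz continuity. Nonnegativity holds on $\mathcal B$ because every factor is nonnegative. Suppose $s>0$. Since all $R_k\ge0$ we have $R_i\le\sum_kR_k=s$, and so
\[
\tilde{\xi}_{ij}=\frac{R_i}{s}\,C_j\le C_j .
\]
Also $C_j\le\sum_\ell C_\ell=s$ by nonnegativity and the balance condition defining $\mathcal B$. If $s=0$, then $\tilde{\xi}=0$ by convention, and the chain of inequalities holds trivially. In particular $\|\tilde{\xi}\|_\infty\le s\to0$ as $s\to0$, so the convention agrees with the continuous extension. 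This settles continuity at the singular face $\{s=0\}$, which on $\mathcal B$ is the single point $R=C=0$ in the factor variables.

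For local Lipschitz continuity I would separate two regions. On the region $\{s\ge\eta\}\cap\mathcal B$ the map $(R,C)\mapsto R_iC_j/s$ is smooth, with derivatives bounded on bounded sets. The real issue is uniformity as $s\downarrow0$. The map is homogeneous of degree one: $\tilde{\xi}(\lambda R,\lambda C)=\lambda\tilde{\xi}(R,C)$. Its gradient is therefore homogeneous of degree zero, so it suffices to bound the gradient on the compact slice $\{s=1\}\cap\mathcal B$. Differentiating,
\[
\partial_{R_k}\tilde{\xi}_{ij}=\frac{\delta_{ik}C_j}{s}-\frac{R_iC_j}{s^2},\qquad \partial_{C_l}\tilde{\xi}_{ij}=\frac{\delta_{jl}R_i}{s}.
\]
Each of these has absolute value at most $2$ on $\mathcal B\setminus\{0\}$, because $R_i,C_j\le s$. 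So $\tilde{\xi}$ has a uniformly bounded gradient on $\mathcal B\setminus\{0\}$.

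To pass from this gradient bound to a Lipschitz estimate, take $z,z'\in\mathcal B$. The set $\mathcal B$ is a convex cone: it is the intersection of the nonnegative orthant with a linear subspace. Hence the segment $[z,z']$ lies in $\mathcal B$, and it meets $\{0\}$ at most at an endpoint. If the segment avoids the origin, integrating the gradient along it gives
\[
|\tilde{\xi}(z)-\tilde{\xi}(z')|\le 2\sqrt{m+n}\,\|z-z'\|.
\]
If one endpoint is $0$, the bound $|\tilde{\xi}(z)|\le s\le\|z\|_1$ gives the estimate directly. Thus $\tilde{\xi}$ is in fact globally Lipschitz on $\mathcal B$ in the factor variables. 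Multiplying by $P$, which is bounded on bounded sets, gives local Lipschitz continuity of $\tilde{\xi}\odot P$ on $\mathcal B$ times the $(X,P)$ space.

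The main obstacle is exactly this singular point $R=C=0$, where the formula $R_iC_j/s$ is $0/0$. The balance constraint is what makes the estimate work: $C_j\le s$ fails off $\mathcal B$, and there the quotient is not even bounded near $s=0$. I would also remark, for later use, why the dynamics stay on $\mathcal B$. Summing \eqref{eq:rk1_R} and \eqref{eq:rk1_C} shows that both $\sum_iR_i$ and $\sum_jC_j$ satisfy
\[
\dot{s}=\frac{\|P\|_F^2}{\mu}-\alpha s.
\]
Their difference therefore decays like $e^{-\alpha t}$, and it vanishes identically when $R(0)=C(0)=0$. This is why the theorem assumes zero initial factors.
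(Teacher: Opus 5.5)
Your proof is correct. It reaches the same conclusion as the paper, a global (not just local) Lipschitz bound for $\tilde{\xi}$ on $\mathcal B$, but by a different route in the case $s>0$.

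The paper argues purely algebraically. It writes $R=s\rho$, $C=s\sigma$ with $\rho,\sigma$ in simplices, which is where the balance condition enters. Two applications of the triangle inequality then give $|\tilde{\xi}_{ij}-\tilde{\xi}'_{ij}|\le 3|s-s'|+|R_i-R'_i|+|C_j-C'_j|$, together with $|s-s'|\le\lVert R-R'\rVert_1$. The apex is handled exactly as you do, through $\tilde{\xi}_{ij}\le s$.

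You instead bound the partial derivatives on $\mathcal B\setminus\{0\}$ using $R_i,C_j\le s$, and then integrate along segments. This needs two additional facts: that $\mathcal B$ is convex, and that a segment in $\mathcal B$ meets the origin only at an endpoint. You verify both correctly.

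The paper's version uses no differentiability or convexity and produces explicit $\ell^1$ constants in one line. Yours shows why a uniform constant exists, namely degree-one homogeneity plus a bounded gradient on the cone, and would transfer unchanged to any friction law with that structure. Your compact-slice homogeneity step is redundant, since you bound the partials directly on all of $\mathcal B\setminus\{0\}$.

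One side remark is inaccurate, though nothing in the proof depends on it. Off $\mathcal B$ the quotient is still bounded, because $\tilde{\xi}_{ij}=(R_i/s)\,C_j\le C_j$ uses only $R\ge0$. What actually fails there is continuity at points with $R=0$, $C\neq0$, where the limit depends on the direction $R/s$ of approach. Such points accumulate at the origin, so no local Lipschitz bound holds near it.
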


\begin{proof}
The bound follows from $R_i\le s$. For local Lipschitz continuity,
consider two points of $\mathcal B$ with totals $0\le s\le s'$. If
$s=s'=0$, both friction tensors vanish. If $s=0<s'$, then
\[
 |\tilde{\xi}_{ij}-\tilde{\xi}'_{ij}|
 =\frac{R'_iC'_j}{s'}\le s'=|s-s'|.
\]
If $s>0$, write
\[
 R=s\rho,\quad C=s\sigma,\qquad R'=s'\rho',\quad C'=s'\sigma',
\]
where the four normalized vectors lie in the appropriate simplices. Then
\[
 \left|s\rho_i\sigma_j-s'\rho'_i\sigma'_j\right|
 \le |s-s'|+s|\rho_i-\rho'_i|+s|\sigma_j-\sigma'_j|,
\]
and
\[
 s|\rho_i-\rho'_i|\le |R_i-R'_i|+|s-s'|,
 \qquad
 s|\sigma_j-\sigma'_j|\le |C_j-C'_j|+|s-s'|.
\]
Since $|s-s'|\le\lVert R-R'\rVert_1$, these estimates give the required
Lipschitz bound.
\end{proof}

\begin{lemma}[Kinetic energy on a fixed window]\label{lem:observability}
Assume the hypotheses of Theorem~\ref{thm:rates}, and suppose a solution of
\eqref{eq:rk1_x}--\eqref{eq:rk1_C} satisfies
\[
 X(t)\longrightarrow X^*,\qquad P(t)\longrightarrow0,
 \qquad \lVert\tilde{\xi}(t)\rVert_\infty\longrightarrow0.
\]
Then for every $T>0$ there are $c_0>0$ and $t_0\ge0$, both depending on
the solution, such that
\begin{equation}
 \int_t^{t+T}\lVert P(u)\rVert_F^2\,du\ge c_0\mathcal{E}(t)
 \qquad(t\ge t_0).                                           \label{eq:observability}
\end{equation}
\end{lemma}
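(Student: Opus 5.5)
We argue by compactness and contradiction, comparing the dynamics near $(X^*,0)$ with the undamped linearisation. Because $f$ is $C^2$ near $X^*$ and strongly convex, the Hessian $H=\nabla^2 f(X^*)$ is symmetric with $H\succeq m_f I$. For large $t$ the solution lies in the neighbourhood where $f$ is $C^2$, so we may write $\nabla f(X)=H(X-X^*)+o(\|X-X^*\|)$. The linear conservative system $\dot Y=P$, $\dot P=-HY$ has quadratic energy $\mathcal E_0(Y,P)=\tfrac12\langle Y,HY\rangle+\tfrac12\|P\|_F^2$. We will use the fact that $\mathcal E(t)$ and $\mathcal E_0(X(t)-X^*,P(t))$ have ratio tending to $1$ as $t\to\infty$, which holds because $f(X)-f(X^*)=\tfrac12\langle Y,HY\rangle+o(\|Y\|^2)$.

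First we handle the trivial case. If $\mathcal E(t)=0$ for some $t$, then $(X,P)=(X^*,0)$ is an equilibrium, and \eqref{eq:observability} holds for any $c_0$ from that time on. So we assume $\mathcal E(t)>0$ for all $t$. Suppose \eqref{eq:observability} fails. Then there is a sequence $t_k\to\infty$ with
\[
\int_{t_k}^{t_k+T}\|P\|_F^2\,du<\tfrac1k\,\mathcal E(t_k).
\]
We rescale by setting $\lambda_k=\sqrt{\mathcal E(t_k)}\to0$ and
\[
Y_k(\tau)=\frac{X(t_k+\tau)-X^*}{\lambda_k},\qquad Q_k(\tau)=\frac{P(t_k+\tau)}{\lambda_k},\qquad \tau\in[0,T].
\]
These satisfy
\[
\dot Y_k=Q_k,\qquad \dot Q_k=-HY_k+r_k(\tau)-\tilde\xi(t_k+\tau)\odot Q_k,
\]
where $r_k=o(1)\cdot\|Y_k\|$ uniformly on $[0,T]$, because $X(t_k+\tau)\to X^*$ uniformly in $\tau\in[0,T]$.

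Next we bound the rescaled energy. By \eqref{eq:E_decay}, $\mathcal E$ is nonincreasing, so $\mathcal E(t_k+\tau)\le\mathcal E(t_k)$. Strong convexity then gives $\|Y_k\|^2+\|Q_k\|^2\le C$ on $[0,T]$. We also have $\|\tilde\xi(t_k+\tau)\|_\infty\to0$ uniformly. Hence the perturbation terms vanish uniformly, and by Arzelà--Ascoli and Gronwall, $(Y_k,Q_k)$ converges uniformly on $[0,T]$ to a solution $(Y,Q)$ of the linear system $\dot Y=Q$, $\dot Q=-HY$. Its energy is $\mathcal E_0(Y(0),Q(0))=\lim\mathcal E(t_k)/\lambda_k^2=1$.

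The contradiction hypothesis passes to the limit as $\int_0^T\|Q\|^2\,d\tau=0$. This forces $Q\equiv0$ on $[0,T]$, hence $\dot Q=0$, so $HY=0$ and $Y=0$ since $H$ is invertible. That contradicts $\mathcal E_0=1$. The contradiction gives $c_0>0$ and $t_0$ such that \eqref{eq:observability} holds.

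The main obstacle is making the linearisation remainder uniform. We must check that $\|\nabla f(X)-H(X-X^*)\|\le\eta(\|X-X^*\|)\,\|X-X^*\|$ with $\eta\to0$, which follows from $C^2$ regularity near $X^*$. We also need the energy ratio $\mathcal E/\mathcal E_0\to1$ so that the normalisation passes to the limit, and Gronwall on the finite window $[0,T]$ to control the difference between $(Y_k,Q_k)$ and the linear flow. The friction term is harmless because $\|\tilde\xi\|_\infty\to0$ and $Q_k$ is bounded. This avoids any explicit observability constant for the linear oscillator, at the price of $c_0$ and $t_0$ depending on the solution, as the statement allows.
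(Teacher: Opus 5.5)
Your proof is correct and follows essentially the same route as the paper: argue by contradiction along a sequence $t_k$, rescale by $\sqrt{\mathcal{E}(t_k)}$, use Arzel\`a--Ascoli to extract a limit solving the undamped linearisation $\dot y=p$, $\dot p=-Hy$ with unit energy, and let the vanishing kinetic integral force $p=0$, then $Hy=0$, then $y=0$. The Gronwall comparison you mention is an optional alternative; the paper instead passes to the limit directly in the integral form of the equations.
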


\begin{proof}
Fix $T>0$. If the result is false, for every $k$ there is $t_k\ge k$
violating \eqref{eq:observability} with $c_0=1/k$. The integral is
nonnegative, so
$e_k=\mathcal{E}(t_k)>0$, and
\begin{equation}
 \frac1{e_k}\int_{t_k}^{t_k+T}\lVert P(u)\rVert_F^2\,du
 \longrightarrow0.                                          \label{eq:obs_contra}
\end{equation}
Set, for $0\le u\le T$,
\[
 y_k(u)=\frac{X(t_k+u)-X^*}{\sqrt{e_k}},
 \qquad
 p_k(u)=\frac{P(t_k+u)}{\sqrt{e_k}}.
\]
The assumed convergence gives $e_k\to0$.
Energy monotonicity and strong convexity make $y_k$ and $p_k$ uniformly
bounded. On bounded sets of $y$, the $C^2$ assumption gives, uniformly,
\[
 \frac{\nabla f(X^*+\sqrt{e_k}y)}{\sqrt{e_k}}
 \longrightarrow Hy,\qquad H=\nabla^2f(X^*).
\]
The equations and the uniform convergence of
$\tilde{\xi}(t_k+u)$ to zero on $[0,T]$ now give uniform bounds on
$\dot y_k$ and $\dot p_k$. Hence there is a constant $L$, independent
of $k$, such that
\[
 \lVert y_k(u)-y_k(v)\rVert_F
 +\lVert p_k(u)-p_k(v)\rVert_F\le L|u-v|
 \qquad(0\le u,v\le T).
\]
The Arzel\`a--Ascoli theorem therefore gives a subsequence on which both
sequences converge uniformly, say to $y$ and $p$. For this subsequence,
the equations are
\begin{align*}
 y_k(u)&=y_k(0)+\int_0^u p_k(v)\,dv,\\
 p_k(u)&=p_k(0)-\int_0^u
 \left[
 \frac{\nabla f(X^*+\sqrt{e_k}y_k(v))}{\sqrt{e_k}}
 +\tilde{\xi}(t_k+v)\mathbin\odot p_k(v)
 \right]dv.
\end{align*}
Uniform convergence gives $p_k\to p$. The Taylor estimate above gives
uniform convergence of the scaled gradient to $Hy$, while
$\tilde{\xi}(t_k+v)\mathbin\odot p_k(v)\to0$ uniformly. Taking the limit
in these identities gives
\[
 y(u)=y(0)+\int_0^u p(v)\,dv,
 \qquad
 p(u)=p(0)-\int_0^u Hy(v)\,dv.
\]
Thus the limits solve
\begin{equation}
 \dot y=p,\qquad \dot p=-Hy.                                 \label{eq:limit}
\end{equation}
Taylor expansion of $f$ at $X^*$ gives
\[
 \frac12\langle Hy(0),y(0)\rangle
 +\frac12\lVert p(0)\rVert_F^2=1.
\]
Equation \eqref{eq:obs_contra} gives
$\int_0^T\lVert p(u)\rVert_F^2\,du=0$. Thus $p=0$ on $[0,T]$.
Equation \eqref{eq:limit} gives $Hy=0$. Strong convexity makes $H$
positive definite, so $y=0$, contradicting the normalized energy.
\end{proof}

\begin{proof}[Proof of Theorem~\ref{thm:rates}]
\smallskip\emph{Boundedness and convergence.}
Differentiating \eqref{eq:energy} gives
\begin{equation}
 \dot{\mathcal{E}}(t)=-\sum_{i,j}\tilde{\xi}_{ij}(t)P_{ij}(t)^2.       \label{eq:edot}
\end{equation}
Thus $\mathcal{E}$ is nonincreasing. Strong convexity bounds $X$ and $P$.
Variation of constants gives
\begin{align}
 R_i(t)&=e^{-\alpha t}R_i(0)
 +\frac1\mu\int_0^te^{-\alpha(t-u)}\sum_jP_{ij}(u)^2\,du,
                                                                    \label{eq:Rfilter}\\
 C_j(t)&=e^{-\alpha t}C_j(0)
 +\frac1\mu\int_0^te^{-\alpha(t-u)}\sum_iP_{ij}(u)^2\,du.    \label{eq:Cfilter}
\end{align}
Hence the factors are also bounded. Local Lipschitz continuity, together
with boundedness, gives global existence. In the zero-stabilizer case,
local Lipschitz continuity is understood on the phase space in
Lemma~\ref{lem:balanced}.
If $\mathcal{E}(0)=0$, then $X=X^*$, $P=0$, and the energy remains zero.
Assume from now on that $\mathcal{E}(0)>0$.

Consider a complete bounded trajectory in the
$\omega$-limit set on which $\dot{\mathcal{E}}=0$. Boundedness of the complete
trajectory gives
\[
 R_i(t)=\frac1\mu\int_{-\infty}^t
 e^{-\alpha(t-u)}\sum_jP_{ij}(u)^2\,du,
 \qquad
 C_j(t)=\frac1\mu\int_{-\infty}^t
 e^{-\alpha(t-u)}\sum_iP_{ij}(u)^2\,du.
\]
If some $P_{ij}\ne0$, it remains nonzero on a short interval, and these
formulas make both $R_i$ and $C_j$ positive on a nonempty subinterval.
This makes
$\tilde{\xi}_{ij}P_{ij}^2>0$, a contradiction. Hence $P=0$ on every
such trajectory. Invariance then gives $\nabla f(X)=0$, and thus
$X=X^*$. LaSalle's invariance principle and the filter formulas give
\begin{equation}
 X(t)\to X^*,\qquad P(t)\to0,\qquad R(t),C(t)\to0.            \label{eq:convergence}
\end{equation}
For $\epsilon_{\mathrm{stab}}=0$, the two factor totals satisfy the same
scalar equation. The initialization therefore keeps the solution in
$\mathcal B$, and Lemma~\ref{lem:balanced} applies. In both cases,
\eqref{eq:convergence} gives
$\lVert\tilde{\xi}(t)\rVert_\infty\to0$.
Since the friction is also bounded,
$\lVert\tilde{\xi}\rVert_\infty\le M_0$ for a fixed $M_0$, so
\eqref{eq:edot} gives $-\dot{\mathcal{E}}\le M_0K\le2M_0\mathcal{E}$, where
$K=\lVert P\rVert_F^2$. Hence $\mathcal{E}(t)\ge\mathcal{E}(0)e^{-2M_0t}>0$.

\smallskip\emph{The factor totals.}
Let $q$ denote either $s=\sum_iR_i$ or
$\bar s=\sum_jC_j$. Both satisfy
\[
 \dot q=\frac K\mu-\alpha q.
\]
Set $w=q/\mathcal{E}$. Since $K\le2\mathcal{E}$, equation \eqref{eq:edot} gives
\[
 \dot w
 \le\frac2\mu-\alpha w
       +2\lVert\tilde{\xi}\rVert_\infty w.
\]
The last coefficient tends to zero. Scalar comparison therefore gives
constants $Z_R,Z_C>0$ such that, for all sufficiently large $t$,
\begin{equation}
 s(t)\le Z_R\mathcal{E}(t),\qquad \bar s(t)\le Z_C\mathcal{E}(t).               \label{eq:factor_totals}
\end{equation}

Lemma~\ref{lem:observability} also applies. Choose $t_1$ so that
\eqref{eq:observability} and \eqref{eq:factor_totals} hold for
$t\ge t_1$.

\smallskip\emph{A common window estimate.}
Let $d=mn$. For each $t\ge t_1$, choose $(i,j)$ such that
\begin{equation}
 G:=\int_t^{t+T}P_{ij}(u)^2\,du
 \ge \frac{c_0}{d}\mathcal{E}(t).
\end{equation}
For $t\le u\le t+T$, put
\[
 G(u)=\int_t^uP_{ij}(v)^2\,dv,
\]
so that $G(t)=0$ and $G(t+T)=G$.
Equations \eqref{eq:Rfilter} and \eqref{eq:Cfilter}, with their nonnegative
omitted terms, give
\[
 R_i(u)\ge\frac{e^{-\alpha T}}\mu G(u),
 \qquad
 C_j(u)\ge\frac{e^{-\alpha T}}\mu G(u).
\]
It follows that
\begin{align}
 \int_t^{t+T}R_i(u)C_j(u)P_{ij}(u)^2\,du
 &\ge \frac{e^{-2\alpha T}}{\mu^2}
       \int_t^{t+T}G(u)^2G'(u)\,du\nonumber\\
 &=\frac{e^{-2\alpha T}}{3\mu^2}G^3
 \ge c_1\mathcal{E}(t)^3,                                             \label{eq:cubic}
\end{align}
where $c_1>0$ is fixed.

\smallskip\emph{Upper bound when $\epsilon_{\mathrm{stab}}>0$.}
Boundedness of $R$ gives
$s(u)+\epsilon_{\mathrm{stab}}\le M$ for a fixed $M$. Integrating
\eqref{eq:edot}, retaining the single index chosen above, and using
\eqref{eq:cubic} yields
\begin{equation}
 \mathcal{E}(t)-\mathcal{E}(t+T)\ge c_2\mathcal{E}(t)^3.                                \label{eq:cubic_window}
\end{equation}

\smallskip\emph{Upper bound when $\epsilon_{\mathrm{stab}}=0$.}
The solution is not the equilibrium, and \eqref{eq:Rfilter} gives
$s(t)>0$ for every $t>0$. Throughout a window $[t,t+T]$ with
$t\ge t_1$, \eqref{eq:factor_totals} and monotonicity give
\[
 s(u)\le Z_R\mathcal{E}(u)\le Z_R\mathcal{E}(t).
\]
Equations \eqref{eq:edot} and \eqref{eq:cubic} now give
\begin{equation}
 \mathcal{E}(t)-\mathcal{E}(t+T)\ge c_3\mathcal{E}(t)^2.                                \label{eq:quadratic_window}
\end{equation}

\smallskip\emph{Iteration of the window estimates.}
Set $e_k=\mathcal{E}(t_1+kT)$. If some
$e_k=0$, then $\mathcal{E}\equiv0$ and the bound is trivial, so assume
$e_k>0$ for every $k$. If
\[
 e_k-e_{k+1}\ge c e_k^q,
\]
then the mean value theorem gives
\[
 e_{k+1}^{-(q-1)}-e_k^{-(q-1)}\ge(q-1)c.
\]
Thus $e_k=O(k^{-1/(q-1)})$. Monotonicity of $\mathcal{E}$ extends the bound from
the grid points to every $t\ge t_1$. Increasing the constant covers
$[0,t_1]$. Taking $q=3$ in
\eqref{eq:cubic_window} and $q=2$ in \eqref{eq:quadratic_window} gives
the two upper bounds.

\smallskip\emph{Lower bound when $\epsilon_{\mathrm{stab}}=0$.}
By \eqref{eq:xi_s_bound}, \eqref{eq:edot}, and
\eqref{eq:factor_totals},
\[
 -\dot{\mathcal{E}}\le sK\le2Z_R\mathcal{E}^2,
 \qquad\text{that is,}\qquad
 \frac{d}{dt}\frac{1}{\mathcal{E}}\le2Z_R,
\]
for $t\ge t_1$. Integrating from $t_1$ gives
\[
 \frac1{\mathcal{E}(t)}\le\frac1{\mathcal{E}(t_1)}+2Z_R(t-t_1),\qquad t\ge t_1,
\]
and hence a constant $b_1>0$ with $\mathcal{E}(t)\ge b_1/(1+t)$ for $t\ge t_1$.

\smallskip\emph{Lower bound when $\epsilon_{\mathrm{stab}}>0$.}
Since $R_i\le s$, $C_j\le\bar s$, and $K\le2\mathcal{E}$,
\eqref{eq:edot} and \eqref{eq:factor_totals} give
\[
 \tilde{\xi}_{ij}\le\frac{s\bar s}{\epsilon_{\mathrm{stab}}},
 \qquad
 -\dot{\mathcal{E}}\le\frac{s\bar s}{\epsilon_{\mathrm{stab}}}K
 \le\frac{2Z_RZ_C}{\epsilon_{\mathrm{stab}}}\mathcal{E}^3,
 \qquad\text{that is,}\qquad
 \frac{d}{dt}\frac1{\mathcal{E}^2}
 \le\frac{4Z_RZ_C}{\epsilon_{\mathrm{stab}}},
\]
for $t\ge t_1$. Integrating from $t_1$ gives
\[
 \frac1{\mathcal{E}(t)^2}\le\frac1{\mathcal{E}(t_1)^2}
 +\frac{4Z_RZ_C}{\epsilon_{\mathrm{stab}}}(t-t_1),\qquad t\ge t_1,
\]
and hence a constant $a_1>0$ with $\mathcal{E}(t)\ge a_1/\sqrt{1+t}$ for
$t\ge t_1$. Reducing $a_1$ and $b_1$, if necessary, extends both lower
bounds to $[0,t_1]$.
\end{proof}

\end{document}